\documentclass[]{fairmeta}

\title{DP-RDM: Adapting Diffusion Models to Private Domains Without Fine-Tuning}

\author[1,2, 3]{Jonathan Lebensold}
\author[1]{Maziar Sanjabi}
\author[1]{Pietro Astolfi}
\author[1, 2, 3, 4]{Adriana Romero-Soriano}
\author[1]{Kamalika Chaudhuri}
\author[1,2]{Mike Rabbat}
\author[1]{Chuan Guo}

\affiliation[1]{Meta AI}
\affiliation[2]{Mila - Québec AI Institute, Montreal, Quebec, Canada }
\affiliation[3]{Department of Computer Science, McGill University, Montreal, Quebec, Canada}
\affiliation[4]{Canada CIFAR AI Chair}

\date{\today}
\correspondence{Jonathan Lebensold at \email{lebensold@meta.com}, Chuan Guo at \email{chuanguo@meta.com}}
\code{\url{https://github.com/facebookresearch/dp_rdm}}
\usepackage{wrapfig}
\usepackage{microtype}
\usepackage{graphicx}

\usepackage{hyperref}
\usepackage[hypcap=false]{caption}

\usepackage[algo2e]{algorithm2e}

\usepackage{amsmath}
\usepackage{amssymb, bbm}
\usepackage{bbold}
\usepackage{mathtools}
\usepackage{amsthm}
\usepackage{enumitem}
\usepackage{algorithm}
\usepackage{cleveref}

\theoremstyle{plain}
\newtheorem{theorem}{Theorem}[section]
\theoremstyle{definition}
\newtheorem{definition}[theorem]{Definition}

\usepackage{booktabs,makecell, multirow, tabularx}

\usepackage{capt-of,etoolbox}

\makeatletter
\apptocmd{\@maketitle}{\centering\teaserfig}{}{}
\makeatother

\newcommand{\eps}{\epsilon}

\newcommand{\R}{\mathcal{R}}

\renewcommand{\vec}[1]{\ensuremath{\mathbf{#1}}}

\def\R{\mathbb{R}}

\newcommand{\norm}[1]{\| #1 \|}

\newcommand{\DPRDMADAPT}{\textit{RDM-adapt DPR}}
\newcommand{\DPRDMFB}{\textit{RDM-fb DPR}}
\newcommand{\RDMFB}{\textit{RDM-fb PR}}
\newcommand{\RDMADAPT}{\textit{RDM-adapt PR}}

\newcommand\teaserfig{
\vspace{-.5em}
\begingroup
\begin{center}
\includegraphics[width=5in]{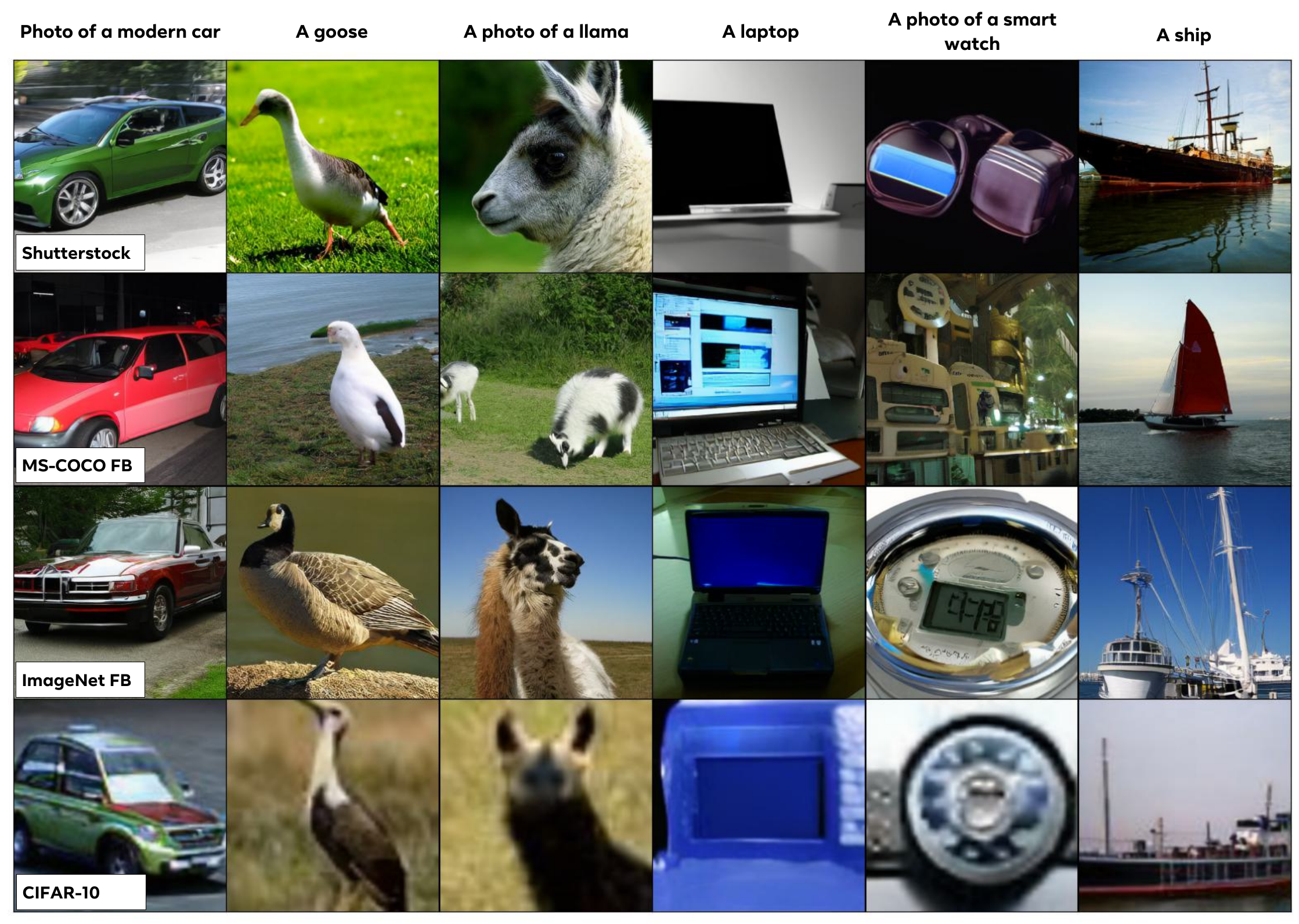}
\vspace{-2ex}
\end{center}
\label{fig:teaser}
\captionof{figure}{Samples generated by our \emph{differentially private retrieval-augmented diffusion model} (DP-RDM), which was trained on face-blurred ImageNet, using different private retrieval datasets at inference time: Shutterstock, MS-COCO with face-blurring (FB), ImageNet FB, and CIFAR-10. We calibrated the noise added in each row for a privacy budget of $\epsilon = 10$ after answering $1,000$ queries. Each query uses $k=18$ neighbors for retrieval augmentation and a 0.1\%-0.3\% random subset of the retrieval dataset. The differences between the generated images show how swapping the private retrieval dataset changes the distribution of the generated images to adapt to a given data domain, \emph{e.g.}, the differences in laptop and smart watch, or missing concepts such as goose, llama, smart watch and laptop in CIFAR-10. }\looseness-1
\endgroup}

\begin{document}
\abstract{
Text-to-image diffusion models have been shown to suffer from sample-level memorization, possibly reproducing near-perfect replica of images that they are trained on, which may be undesirable. To remedy this issue, we develop the first differentially private (DP) retrieval-augmented generation algorithm that is capable of generating high-quality image samples while providing provable privacy guarantees. Specifically, we assume access to a text-to-image diffusion model trained on a small amount of public data, and design a DP retrieval mechanism to augment the text prompt with samples retrieved from a private retrieval dataset. Our \emph{differentially private retrieval-augmented diffusion model} (DP-RDM) requires no fine-tuning on the retrieval dataset to adapt to another domain, and can use state-of-the-art generative models to generate high-quality image samples while satisfying rigorous DP guarantees.
For instance, when evaluated on MS-COCO, our DP-RDM can generate samples with a privacy budget of $\epsilon=10$, while providing a $3.5$ point improvement in FID compared to public-only retrieval for up to $10,000$ queries. 
}
\maketitle
\vspace{-0.1em}
\teaserfig

\section{Introduction}

Text-to-image diffusion models \citep{ho2020denoising, song2020denoising} enable highly customizable image generation, producing photo-realistic image samples that can be instructed through text prompting. However, recent studies also showed that it is possible to prompt text-to-image diffusion models to produce near-perfect replica of some of their training samples~\citep{somepalli2023diffusion, carlini2023extracting}.

One promising strategy for mitigating this risk is through \emph{differential privacy} (DP; \citet{dwork2006calibrating}). In essence, if a generative model satisfies DP, then its generated samples cannot critically depend on any individual training sample, therefore preventing the model from replicating any training image in a provable manner. Currently, the state-of-the-art in DP image generation focuses on adaptation through fine-tuning~\citep{ghalebikesabi2023differentially, lyu2023differentially}, where a diffusion model is first trained on public data (e.g., a licensed dataset) and then fine-tuned on a private dataset using DP-SGD~\citep{abadi2016deep}. This approach, although effective at a small scale, suffers immensely when scaling up the model and the private dataset due to the computational cost of DP fine-tuning \citep{sander2023tan, mehta2022large}. To date, most studies on DP image generation are limited to simple image datasets such as MNIST and CIFAR-10~\citep{dockhorn2022differentially}. 

To address this limitation, we adopt a different approach for private image generation through \emph{differentially private retrieval augmentation}. Retrieval-augmented generation (RAG; \citet{lewis2020retrieval}) is an emerging paradigm in generative modeling that performs generation in a semi-parametric manner. To generate a sample, the model utilizes both its learned parameters as well as a \emph{retrieval dataset} that consists of an arbitrary set of images. In effect, the model can adapt to different domains at generation time by changing the retrieval dataset without fine-tuning. The modularity of RAG makes it ideal for privacy-sensitive applications, where the sensitive data can be stored in the retrieval dataset for more fine-grained control of information leakage. 

We utilize this desirable property of RAG and define a \emph{differentially private retrieval-augmented diffusion model} (DP-RDM) that is capable of generating high-quality images based on text prompts while satisfying rigorous DP guarantees. To this end, we first demonstrate that without any privacy protection, retrieval-augmented diffusion models are prone to copying artifacts from samples in the retrieval dataset. Then, we propose a private retrieval mechanism that adds calibrated noise to the retrieved samples, along with modifications to an existing retrieval-augmented diffusion model architecture~\citep{blattmann2022retrieval} to accommodate this mechanism. As a result, our DP-RDM is capable of generating high-quality samples from complex image distributions at a moderate privacy cost of $\epsilon=10$; see Figure 1 for example. Notably, DP-RDM can potentially work with any state-of-the-art retrieval-augmented image generation model, regardless of the number of parameters and output resolution.
To the best of our knowledge, this is the first work to show privacy risks of using private data in the retrieval dataset for RAG and propose a DP solution for adapting text-to-image diffusion models to sensitive target domains. Our approach holds the potential of democratizing the adoption of such models in privacy-sensitive applications since it does not require any costly fine-tuning, nor does it require sharing sensitive data with a fine-tuning party.

\paragraph{Contributions.} Our main contributions are the following.
\begin{enumerate}[nosep, leftmargin=*]
    \item We demonstrate how retrieval-augmented diffusion models without any privacy protection can leak sample-level information from their retrieval dataset in the worst case.
    \item We propose a \emph{differentially private retrieval-augmented diffusion model} (DP-RDM) that provably mitigates this information leakage. DP-RDM utilizes a DP retrieval mechanism based on private $k$-NN~\citep{zhu2020private} and adapts the existing retrieval-augmented diffusion model architecture~\citep{blattmann2022retrieval} to this mechanism.
    \item We evaluate our DP-RDM on three datasets---CIFAR-10, MS-COCO and Shutterstock, a privately licensed dataset of 239M image-captions pairs---and show that it can effectively adapt to these datasets privately with minor loss in generation quality. On MS-COCO, our model is able to generate up to $10,000$ images at a privacy cost of $\epsilon=10$ while achieving an FID of $10.9$ (lower is better). In comparison, using only the public retrieval dataset yields an FID of $14.4$ with the same model.\looseness-1
\end{enumerate}

\section{Background}
We first present the necessary background on diffusion models, retrieval-augmented generation and differential privacy.\looseness-1

\subsection{Diffusion Models and Retrieval Augmented Generation}

\textbf{Diffusion models} \citep{ho2020denoising, song2020denoising} represent a leap of research progress in generative modeling yielding photorealistic generated images. During training, an image is corrupted with noise (\emph{i.e.}, forward process), and fed to a model (neural network), whose learning objective is to predict the noise in order to perform denoising (\emph{i.e.}, reverse process). At generation time, the model produces an image sample from noise through iterative denoising. Although this denoising process can be carried out in the raw pixel space, a more effective approach is to perform it in the latent space of a pre-trained autoencoder, giving rise to the so-called latent diffusion models (LDMs; \citet{rombach2022high}). Another popular extension to diffusion models is text-to-image diffusion models~\citep{rombach2022high, saharia2022photorealistic}, where the denoising process is conditioned on text captions from paired image-text datasets. In effect, the model learns to produce images that are relevant to a given text prompt.\\

\textbf{Retrieval augmented generation (RAG).}
While text-to-image generative models have achieved remarkable results, one major drawback is their limited generation controllability. \emph{Retrieval augmentation}~\citep{sheynin2022knn, chen2022re, blattmann2022retrieval, yasunaga2023retrieval} is an emerging paradigm that provides more fine-grained control through the use of a retrieval dataset. In \emph{retrieval augmented generation} (RAG), the model is trained to condition on both the text prompt and samples retrieved from the retrieval dataset, making it possible to steer generation by modifying the retrieved samples. Aside from controllable generation, RAG has additional benefits: \textbf{1.} By scaling up the retrieval dataset, one can surpass the typical learning capacity limits of parametric models and obtain higher-quality samples without increase the model size \citep{blattmann2022retrieval}. \textbf{2.} By using retrieval datasets from different image domains, the model can adapt to a new target domain without fine-tuning \citep{sheynin2022knn, casanova2021instanceconditioned}. \textbf{3.} It is easy to satisfy security and privacy requirements such as information flow control~\citep{tiwari2023information, wutschitz2023rethinking}
and right-to-be-forgotten~\citep{ginart2019making, guo2019certified, bourtoule2021machine} for samples in the retrieval dataset. \looseness-1

\subsection{Differential Privacy}

Throughout we will consider randomized mechanisms operating on a dataset $D$ in some data space $\mathcal{X}$. Two datasets $D$ and $D'$ are said to be \textit{neighboring}, denoted $D \simeq D'$, if they differ in a single data point either in addition, removal or replacement of some $x \in D$.

\begin{definition}[DP; \citet{dwork2006calibrating}]
\label{def:approxdp}
Let $\epsilon \geq 0$ and $\delta \geq 0$. A randomized mechanism $M : \mathcal{X} \rightarrow \mathcal{O}$, is $(\epsilon, \delta)$-DP if for every pair of neighboring datasets $D \simeq D'$ and every subset $S \subseteq \mathcal{O}$, we have:
\begin{align*}
  \label{eq:puredp}
  \Pr [ M(D) \in S ] \leq e^{\epsilon} \Pr [ M(D') \in S ] + \delta \enspace.
\end{align*}
\end{definition}

The privacy guarantee we offer relies on Rényi DP, a generalization of DP measured in terms of the Rényi divergence.\looseness-1

\begin{definition}[Rényi divergence; \citet{renyi1961measures}]
  Let $\alpha > 1$. The Rényi divergence of order $\alpha$
  between two probability distributions $P$ and $Q$ on $\mathcal{X}$ is defined by:
\begin{equation*}\label{eq:Renyi_divergences}
  \mathbb{D}_{\alpha}(P||Q) \triangleq \frac{1}{\alpha - 1}
  \log \mathbb{E}_{o \sim Q }
  \left [ \frac{P(o)}{Q(o)} \right ]^\alpha.
\end{equation*}
\end{definition}

\begin{definition}[Rényi DP; \citet{mironov2017renyi}]
\label{def:rdp}
Let $\alpha > 1$ and $\epsilon \geq 0$. A randomized mechanism $M$ is $(\alpha, \epsilon)$-RDP if for all $D \simeq D'$, we have $\mathbb{D}_{\alpha}(M(D) \| M(D') ) \leq \epsilon $.
\end{definition}

A Rényi DP guarantee can be translated to an $(\epsilon,\delta)$-DP guarantee through conversion~\citep{balle2020hypothesis}. Rényi DP also benefits from simplified privacy composition, whereby the privacy budget for multiple invocations can be added together. Finally, Rényi DP inherits the post-processing property of DP, meaning that future computation over a private result cannot be made less private~\citep{mironov2017renyi}.

\paragraph{DP mechanisms.} In this work, we mainly consider the Sampled Gaussian Mechanism (SGM) which is an extension of the Gaussian Mechanism.

\begin{definition}[Gaussian Mechanism]
Let $f : \mathcal{X} \to \R^d$ be a query with global sensitivity $\Delta = \sup_{D \simeq D'} \norm{f(D) - f(D')}_2$ and $Z \sim \mathcal{N}(0, \sigma^2 I)$. 
The Gaussian Mechanism, defined as $M(D) = f(D) + Z$, 
satisfies $(\alpha, \epsilon)$-RDP with $\epsilon = \frac{\alpha \Delta^{2}}{2 \sigma^2}$ for every $\alpha > 1$ \citep{mironov2017renyi}.
\end{definition}

Given a large dataset, it is also possible to amplify the privacy guarantee by outputting a noisy answer on a subset of the dataset using SGM.

\begin{definition}[SGM; \citet{mironov2019r}] 
\label{def:sgm}
Let $f : \mathcal{X} \to \mathbb{R}^d$ be a function operating on subsets of $D$. The Sampled Gaussian Mechanism (SGM), with sampling rate $q \in (0, 1]$ and noise $\sigma > 0$ is defined by: 
\begin{equation*}
    \text{SG}_{q, \sigma}(D) \triangleq 
    f( \{ \vec{x}: \vec{x} \in D \text{, sampled w.p. } q \}) + Z \enspace,
\end{equation*}
where each $\vec{x}$ is sampled independently with probability $q$ without replacement and $Z \sim \mathcal{N}(0, \sigma^2 I)$.
\end{definition}

\begin{theorem}[SGM satisfies RDP; \citet{mironov2019r}]
Let $\text{SG}_{q, \sigma}(D)$ be defined as \cref{def:sgm} for function $f$. Then $\text{SG}_{q, \sigma}(D)$ satisfies $(\alpha, \epsilon)$-RDP with
\begin{equation*}
\epsilon \leq 
\mathbb{D}_\alpha((1-q) \mathcal{N}(0, \sigma^2)+q \mathcal{N}(1, \sigma^2) \| \mathcal{N}(0, \sigma^2)), 
\enspace
\end{equation*}
if $\| f(D) - f(D') \|_2 \leq 1 $ for adjacent $D, D' \in \cal{S}$. 
\end{theorem}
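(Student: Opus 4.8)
The plan is to follow the argument of \citet{mironov2019r}, reducing the $d$-dimensional subsampled mechanism to a one-dimensional two-component Gaussian mixture, for which the stated Rényi divergence is exactly the relevant quantity. Let $D \simeq D'$ differ in the single point $\vec{x}_n$, say with $D = D' \cup \{\vec{x}_n\}$; we must bound both $\mathbb{D}_\alpha(\text{SG}_{q,\sigma}(D) \,\|\, \text{SG}_{q,\sigma}(D'))$ and the reverse ordering. The key structural step is a conditioning decomposition of the subsampling: a rate-$q$ subsample $S$ of $D$ can be generated by first drawing a rate-$q$ subsample $S'$ of $D'$ and then, independently, including $\vec{x}_n$ with probability $q$. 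Conditioning on $S'$, the output law of $\text{SG}_{q,\sigma}(D')$ is $\mathcal{N}(f(S'),\sigma^2 I)$, while that of $\text{SG}_{q,\sigma}(D)$ is the mixture $(1-q)\,\mathcal{N}(f(S'),\sigma^2 I) + q\,\mathcal{N}(f(S'\cup\{\vec{x}_n\}),\sigma^2 I)$; crucially, both mechanisms induce the \emph{same} distribution over $S'$.

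Next I would eliminate the outer averaging over $S'$. Write $P$ and $Q$ for the output laws of $\text{SG}_{q,\sigma}(D)$ and $\text{SG}_{q,\sigma}(D')$; each density is an $S'$-average of the conditional densities above. The map $(a,b)\mapsto a^\alpha b^{1-\alpha}$ on $(0,\infty)^2$ is jointly convex for $\alpha>1$, being the perspective of the convex function $u \mapsto u^\alpha$, so applying Jensen's inequality pointwise to the integrand $P(o)^\alpha Q(o)^{1-\alpha}$ and then integrating over the output $o$ gives
\begin{equation*}
  \mathbb{D}_\alpha(\text{SG}_{q,\sigma}(D) \,\|\, \text{SG}_{q,\sigma}(D')) \;\le\; \sup_{S'}\, \mathbb{D}_\alpha(P_{S'} \,\|\, Q_{S'}),
\end{equation*}
where $P_{S'}$ and $Q_{S'}$ are the two conditional laws. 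For each fixed $S'$, translating by $f(S')$ and rotating so that $f(S'\cup\{\vec{x}_n\})$ lies on the first coordinate axis leaves the divergence unchanged --- the isotropic Gaussian is rotation invariant and the Rényi divergence is invariant under bijections --- while the remaining $d-1$ coordinates are identically distributed under $P_{S'}$ and $Q_{S'}$ and hence factor out. What is left is the one-dimensional quantity $\mathbb{D}_\alpha((1-q)\mathcal{N}(0,\sigma^2) + q\mathcal{N}(t,\sigma^2) \,\|\, \mathcal{N}(0,\sigma^2))$ with $t = \norm{f(S'\cup\{\vec{x}_n\}) - f(S')}_2 \le 1$ by the sensitivity hypothesis.

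The final step --- which I expect to be the main obstacle --- is to show that this one-dimensional divergence is non-decreasing in the shift $t$ over $[0,1]$, so that its supremum over admissible $S'$ is attained at $t=1$, yielding precisely the bound in the statement. Equivalently, one must show that $\mathbb{E}_{z\sim\mathcal{N}(0,\sigma^2)}[((1-q) + q\,e^{(2tz-t^2)/(2\sigma^2)})^{\alpha}]$ is non-decreasing in $t\ge 0$. For integer $\alpha$ this is easy: a binomial expansion reduces each term to a Gaussian moment-generating-function computation of the form $e^{k(k-1)t^2/(2\sigma^2)}$, which is individually non-decreasing in $t$. For general real $\alpha>1$ the monotonicity requires a more delicate analysis of the $t$-derivative, which is the technical heart of \citet{mironov2019r}. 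The reverse ordering is treated identically: the same reduction turns $\mathbb{D}_\alpha(\text{SG}_{q,\sigma}(D') \,\|\, \text{SG}_{q,\sigma}(D))$ into $\mathbb{D}_\alpha(\mathcal{N}(0,\sigma^2) \,\|\, (1-q)\mathcal{N}(0,\sigma^2)+q\mathcal{N}(1,\sigma^2))$, which \citet{mironov2019r} show is dominated by the displayed quantity, so taking the maximum over both orderings leaves the stated bound intact.
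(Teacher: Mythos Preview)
The paper does not supply its own proof of this theorem: it is quoted verbatim as a background result from \citet{mironov2019r}, with no argument given. Your proposal is a faithful reconstruction of the original Mironov--Talwar--Zhang argument --- the coupling of the subsamples via the common $S'$, the joint-convexity (quasi-convexity) step to pass to a supremum over $S'$, the rotation/translation reduction to one dimension, and the monotonicity in the shift $t$ --- so there is nothing in the paper to compare it against beyond the citation itself. One small remark: your Jensen step literally yields $\mathbb{D}_\alpha(P\|Q) \le \tfrac{1}{\alpha-1}\log \mathbb{E}_{S'}\bigl[e^{(\alpha-1)\mathbb{D}_\alpha(P_{S'}\|Q_{S'})}\bigr]$, from which the $\sup_{S'}$ bound follows by monotonicity; this is standard, but worth making explicit if you write it up.
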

\paragraph{DP training/fine-tuning.} To apply differential privacy to machine learning, the quintessential algorithm is DP-SGD~\citep{abadi2016deep}---a differentially private version of the SGD algorithm. At each iteration, instead of computing the average gradient on a batch of training samples, DP-SGD applies SGM to the sum of norm-clipped per-sample gradients. The resulting noisy gradient can be used directly in optimization, and the total privacy cost can be derived through composition and post-processing.  Although DP-SGD appears straightforward at first, successful application of DP-SGD usually requires excessively large batch sizes to reduce the effect of noisy gradients~\citep{li2021large, de2022unlocking, sander2023tan, yu2023vip}.
This requirement drastically increases the computational cost of training, limiting the application of DP-SGD mostly to simple datasets and small models,
or for fine-tuning a model pre-trained on public data \citep{cattan2022fine, yu2021differentially}. In the context of DP generative modeling, DP-SGD has only been successfully applied to DP fine-tuning on simple datasets such as MNIST, CIFAR-10 and CelebA~\citep{ghalebikesabi2023differentially, lyu2023differentially}.
By contrast, our method achieves differential privacy on retrieval-augmented diffusion without DP fine-tuning.\looseness-1

\section{Differentially Private RDM}
\label{sec:dp_rdm}

\begin{figure*}[ht]
\centering
\begin{subfigure}[b]{0.3\textwidth}
  \centering
 \includegraphics[width=\linewidth]{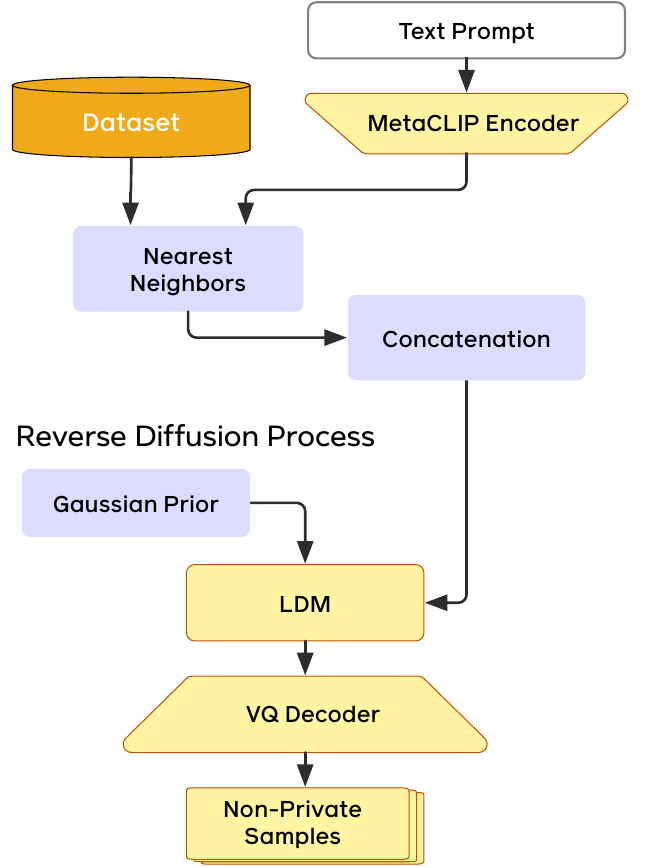}
 \caption{RDM architecture.}
 \label{fig:rdm}
\end{subfigure}
\begin{subfigure}[b]{0.6\textwidth}
 \centering
 \includegraphics[width=\linewidth]{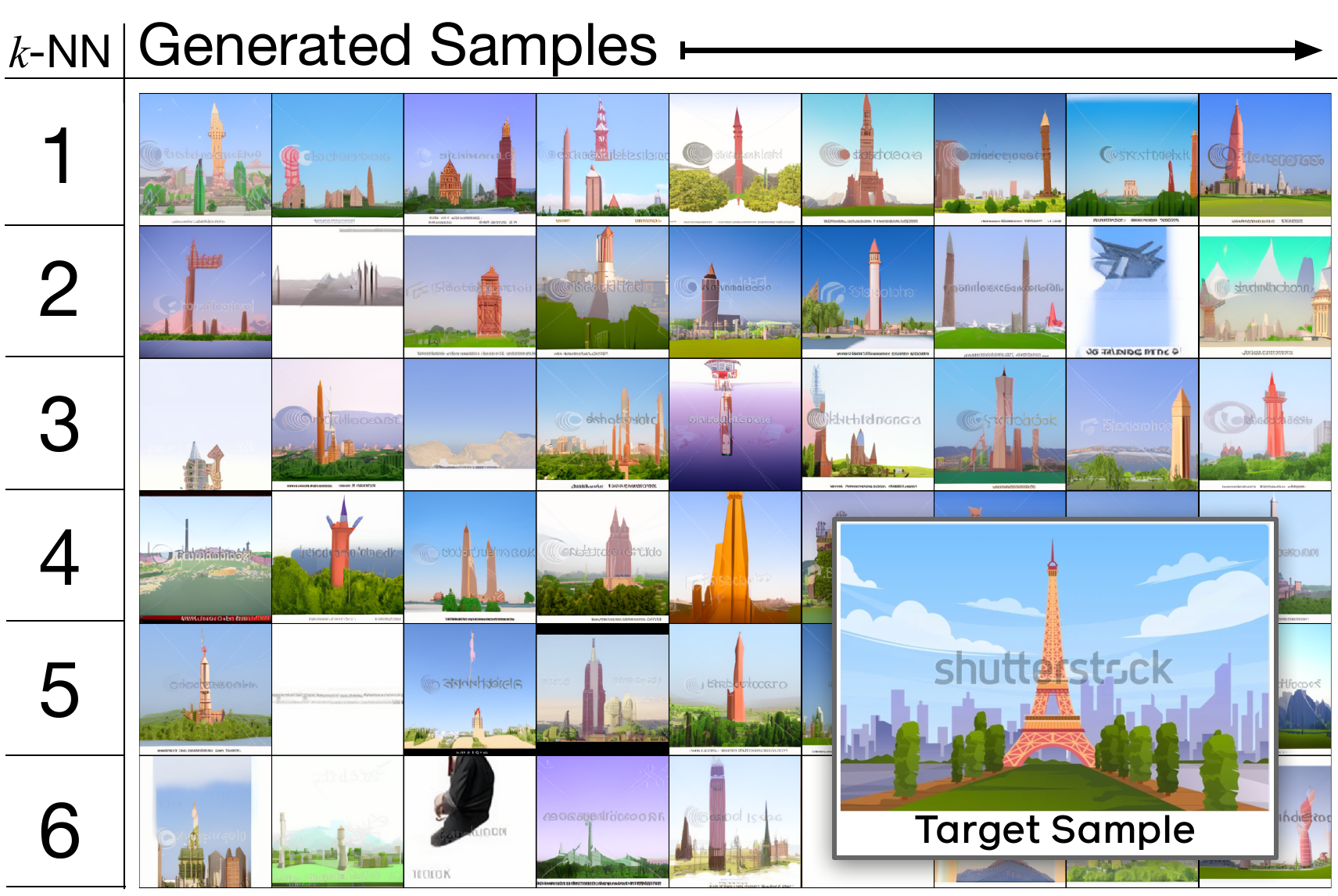}
 \caption{Samples generated by RDM using an adversarial retrieval dataset.}
 \label{fig:target_one_sample}
\end{subfigure}
\caption{(a) RDM architecture from \citet{blattmann2022retrieval}. (b) Samples generated with a non-private RDM. The retrieval dataset consists of blank images and one illustration of the Eiffel Tower with a Shutterstock watermark. Each row shows samples for a different number of retrieved neighbors $k$. 
 The watermark is clearly visible even though it came only from conditioning on the retrieval dataset.}
\end{figure*}

This section exemplifies the privacy risks in RDM, presents our \emph{differentially private retrieval-augmented diffusion model} (DP-RDM) framework, details its technical implementation and introduces privacy guarantees.

\subsection{Motivation: Privacy Risks in Retrieval Augmented Generation}
\label{sec:privacy_attack}

To motivate our DP-RDM framework, we first show that RDM~\citep{blattmann2022retrieval}, like standard text-to-image diffusion models, are also vulnerable to sample memorization.
The RDM architecture is illustrated in \cref{fig:rdm}.
To generate an image sample, RDM encodes a text prompt using the CLIP text encoder~\citep{radford2021learning} and retrieves $k$ nearest neighbors from the retrieval dataset, which contains CLIP-embedded images. The retrieved neighbors are then used as conditioning vectors in lieu of the text prompt for the diffusion model.\looseness-1

We consider information stored in the retrieval dataset to be private. To illustrate the memorization risk in RDMs, we craft an adversarial retrieval dataset that consists of an image of the Eiffel Tower with a Shutterstock watermark (see \cref{fig:target_one_sample}) and a number of blank images. We vary the number of retrieved neighbors $k$ and show image samples generated by the RDM in \cref{fig:target_one_sample}.
Note that the watermark is visible in the generated images even when $k=6$, which is a clear indication of sample-level information leakage. While this experiment was constructed with worst-case assumptions, we argue that such assumptions are reasonable when the data coverage of the retrieval dataset is sparse, and the queries can be chosen adversarially.

\subsection{DP-RDM Architecture}
\label{ssec:architecture}
Although RDM fails to provide privacy protection in the worst case, its architecture is already amenable to fine-grained control of information leakage.
Samples in the retrieval dataset leak information only through the retrieval mechanism, and influence on the generated sample can be attributed precisely to the retrieved conditioning vectors. Thus by building a ``privacy boundary'' around the retrieval mechanism and bounding the influence of each sample on the conditioning vector, we can ensure rigorous and quantifiable privacy protection through DP (see~\cref{ssec:privacy_guarantee}). 

\begin{figure}[ht]
    \centering
    \begin{minipage}{.45\linewidth}
    \centering
    \includegraphics[width=0.9\linewidth]{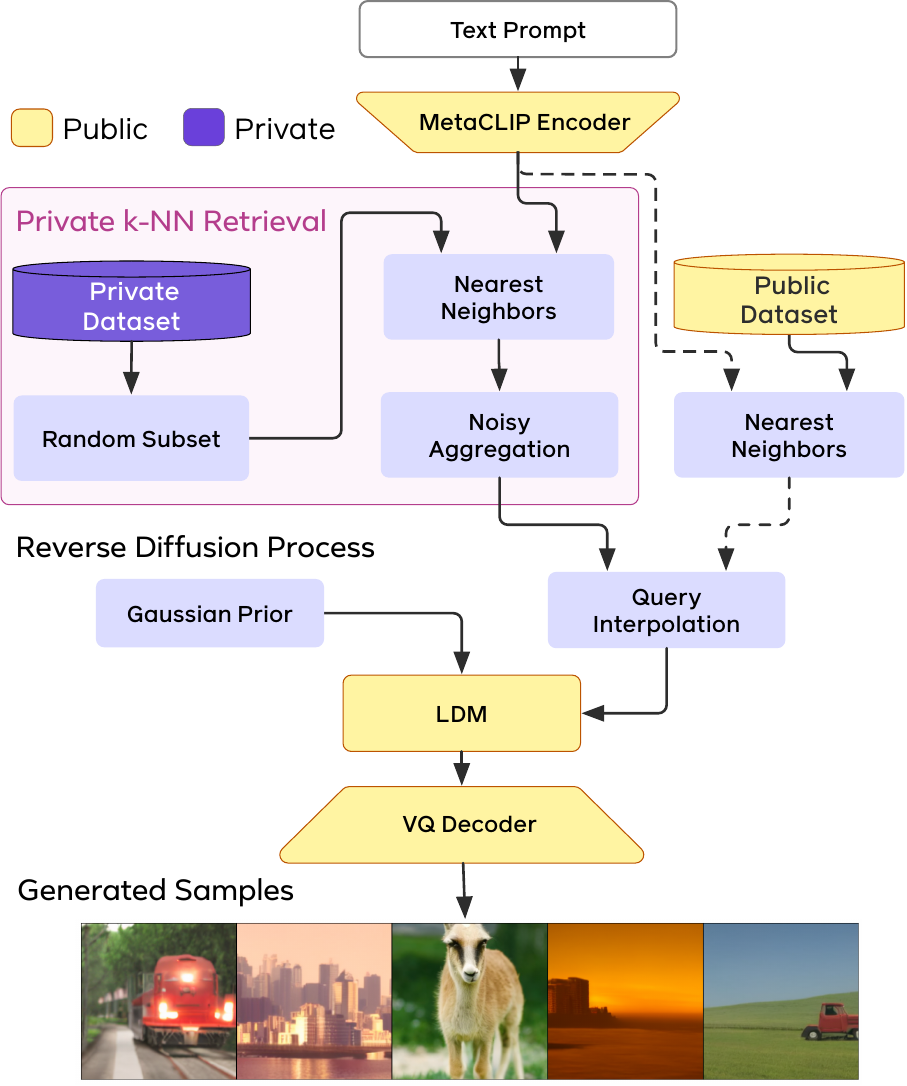}
    \caption{Text-to-image generation with DP-RDM using a private retrieval dataset. Yellow blocks refer to models trained on public datasets. The private $k$-NN block, denoted in pink, illustrates the privacy boundary between public and private data.}
    \label{fig:arch_sample_generation}
    
    \end{minipage}
    \hfill
    \begin{minipage}{.50\linewidth}
        \begin{algorithm}[H]
            \SetAlgoLined
            \SetKwInOut{Input}{input}
            \Input{LDM $\phi$,
            public retrieval dataset $U$;
            private retrieval dataset $D$; text prompt $\vec{y} \in \mathbb{R}^d$; prior $\xi \sim \mathcal{N}(0, I)$}
            \Input{Noise parameter $\sigma > 0$, num. of neighbors $k \in \mathbb{N}$, query interpolation $\lambda \in [0, 1]$, subsample rate $q \in [0, 1]$.}
            
            \textbf{$k$-NN retrieval:}
            
            $D' = \mathsf{RandomSubset}(D, q)$
        
            $N_k = \mathsf{NearestNeighbors}(D', \mathbf{y}, k)$
            
            $O_k = \mathsf{NearestNeighbors}(U, \mathbf{y}, k)$
        
            \textbf{Noisy aggregation:}
            
            $\vec{z} = \frac{1}{k} \left ( \sum_{j=1}^k N_k^{(j)} \right )+ \mathcal{N}(0, \sigma^2 I)$
        
            \textbf{Query interpolation:}
            
            $\vec{e} = (1 - \lambda) \cdot O_k + \lambda \cdot [\vec{z} | \cdots | \vec{z}] $
            
            \Return{$\phi(\vec{e}, \xi )$}
            
            \caption{Generating Samples with DP-RDM.}
            \label{alg:knn_aug_dp}
            \end{algorithm}
            \vspace{-2ex}
            \caption{Pseudo-code description of the private image generation procedure.}
    \end{minipage}
\end{figure}

\Cref{fig:arch_sample_generation} depicts the DP-RDM workflow, whose steps are summarized in \cref{alg:knn_aug_dp}. The framework is based on RDM, which retrieves image embeddings from a database given an input text-query, and uses the retrieved embeddings to condition the image generation process. We endow the retrieval augmentation system with a \emph{private $k$-NN retrieval module}, which produces a privatized image embedding. This privatized image embedding may be interpolated with embeddings of public images retrieved using the text-query. The resulting interpolated embedding is used to condition the reverse diffusion process. We detail the components of the \emph{private $k$-NN retrieval module} in the following.

\paragraph{Retrieval dataset construction.} Let $D$ be a retrieval dataset constructed by encoding a private image dataset using a pre-trained CLIP image encoder~\citep{radford2021learning} trained on a public dataset $U$. Each encoded image is a unit $d$-dimensional vector.
Since the CLIP encoder was trained on a public retrieval dataset, we assume that it is available both at training and sampling time.\looseness-1   

\paragraph{k-NN retrieval with random subset selection.} Given a text prompt, we first encode it into a unit $d$-dimensional vector $\mathbf{y}$ using the CLIP text encoder. Instead of querying $D$ directly, we obtain a subset $D'$ by subsampling each encoded image in $D$ independently with probability $q \in [0,1]$. This is done to utilize privacy amplification via subsampling~\citep{mironov2019r}. We then query $D'$ using $\mathbf{y}$ via max inner product search to obtain its $k$ nearest neighbor set $N_k \subseteq D'$.\looseness-1

\paragraph{Noisy aggregation.} The retrieved set $N_k$ must be privatized before use. Rather than conditioning on these retrieved vectors individually, we aggregate them by computing the mean as $\frac{1}{k}\sum_j{N_k^{(j)}}$.
We then apply calibrated Gaussian noise with standard deviation $\sigma>0$ to obtain the privatized embedding vector $\vec{z}$.\looseness-1

\paragraph{Query interpolation.}
Since the training dataset $U$ for the RDM is public, we can use it in conjunction with the privatized embedding vector $\vec{z}$ for retrieval augmentation. That is, we retrieve the $k$ nearest neighbors $O_k$ from $U$ and combine them with $\vec{z}$ by repeating the vector $\vec{z}$ $k$ times, resulting in $\vec{e} = (1 - \lambda) \cdot O_k + \lambda \cdot [\vec{z} | \cdots | \vec{z}]$.

Note that the contribution from $D$ through $\vec{z}$ decreases as $\lambda$ decreases, which can help improve image quality when $\sigma$, from the noisy aggregation step, is large. In the extreme cases when $\lambda=0$ or $\sigma=\infty$, no private information is leaked from $D$ and we rely entirely on $U$.\looseness-1

\subsection{Model Training}
\label{sec:model_training}
Although the RDM in \citet{blattmann2022retrieval} can be readily used with the DP-RDM generation algorithm, we can drastically improve generation quality by making a minor change in the training algorithm of the RDM. Rather than conditioning on a set of retrieved embeddings from the public dataset, we can utilize the \emph{noisy aggregation} module presented in Section~\ref{ssec:architecture} during training. Hence for each training step $t$, isotropic Gaussian noise is scaled uniform random ($\sigma_t \sim \mathcal{U}(0, \sigma)$) and added to the aggregate embedding. This change helps the model to learn to extract weak signal from the noisy conditioning vector $\mathbf{e}$ at sample generation time, thereby improving the sample quality substantially. We also substitute the CLIP encoder with the open-source MetaCLIP encoder~\citep{xu2023metaclip}. See \cref{sec:app_train_details} for further details.\looseness-1

\subsection{Privacy Guarantee}
\label{ssec:privacy_guarantee}
In the theorem below, we show that running \cref{alg:knn_aug_dp} on a single query satisfies R\'{e}nyi DP. As a result, one can leverage the composition theorem of R\'{e}nyi DP~\citep{mironov2017renyi} and conversion theorem~\citep{balle2020hypothesis} to prove that handling $T$ queries using \cref{alg:knn_aug_dp} satisfies $(\epsilon,\delta)$-DP.

\begin{theorem}
\label{thm:dp_guarantee}
Alg. \ref{alg:knn_aug_dp} satisfies $(\alpha, \epsilon)$-RDP with
\begin{equation}
    \epsilon=\mathbb{D}_\alpha(q \mathcal{N}(2/k, \sigma^2) + (1-q) \mathcal{N}(0, \sigma^2)~||~\mathcal{N}(0, \sigma^2)).
    \label{eq:epsilon_bound}
\end{equation}
\end{theorem}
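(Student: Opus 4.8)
The plan is to isolate the single data-dependent sub-computation of \cref{alg:knn_aug_dp}, recognize it as an instance of the Sampled Gaussian Mechanism applied to a low-sensitivity aggregation, and then combine the SGM--RDP bound with the post-processing property of R\'enyi DP. The first observation is that the private dataset $D$ affects the returned sample $\phi(\vec e, \xi)$ only through the privatized vector $\vec z$: the public neighbor set $O_k = \mathsf{NearestNeighbors}(U, \vec y, k)$ is a deterministic function of the public retrieval set $U$ and the fixed query $\vec y$; the prior $\xi \sim \mathcal N(0,I)$ is drawn independently of $D$; and $\phi$ is a fixed map. Hence $\vec e = (1-\lambda)\,O_k + \lambda\,[\vec z | \cdots | \vec z]$, and therefore the output, is obtained from $\vec z$ by a randomized map whose randomness does not involve $D$. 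By post-processing of R\'enyi DP, it suffices to bound the RDP of the map $D \mapsto \vec z$.

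Next I would write $\vec z = h(D') + Z$, where $Z \sim \mathcal N(0,\sigma^2 I)$, the subset $D' = \mathsf{RandomSubset}(D, q)$ retains each element of $D$ independently with probability $q$, and $h(S) = \tfrac1k \sum_{j=1}^k \mathsf{NearestNeighbors}(S, \vec y, k)^{(j)}$; this is precisely $\text{SG}_{q,\sigma}$ of \cref{def:sgm} applied to $h$. The key step is a sensitivity bound on $h$ under addition/removal adjacency on subsets --- the adjacency under which the SGM amplification is formulated. For $S \simeq S'$ differing by one point, the set of $k$ nearest neighbors of $\vec y$ changes by at most one element: adding (resp.\ removing) a point either leaves the top-$k$ unchanged, or inserts that point and evicts exactly one incumbent (resp.\ removes a top-$k$ point and promotes the $(k+1)$-th). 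Thus $h(S) - h(S')$ equals $\tfrac1k$ times the difference of at most two stored embeddings, and since every stored embedding is a unit vector, $\|h(S) - h(S')\|_2 \le 2/k$; with the natural convention the same bound holds when $|S| < k$.

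Finally I would pass to the normalized SGM. Scaling by $k/2$ is a bijection, so $\tfrac k2 \vec z = \tilde h(D') + \tfrac k2 Z$ with $\tilde h = \tfrac k2 h$ of $\ell_2$-sensitivity at most $1$ and $\tfrac k2 Z \sim \mathcal N(0,(\sigma k/2)^2 I)$; this is $\text{SG}_{q,\,\sigma k/2}$ for a unit-sensitivity function, so the SGM--RDP theorem gives, for every $\alpha > 1$, that $D \mapsto \tfrac k2 \vec z$ is $(\alpha, \epsilon')$-RDP with $\epsilon' \le \mathbb D_\alpha\bigl((1-q)\mathcal N(0,(\sigma k/2)^2) + q\,\mathcal N(1,(\sigma k/2)^2) \,\bigl\|\, \mathcal N(0,(\sigma k/2)^2)\bigr)$. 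R\'enyi divergence is invariant under applying the common bijection $t \mapsto \tfrac2k t$ to both arguments, which rescales these one-dimensional Gaussians to $\mathcal N(0,\sigma^2)$ and $\mathcal N(2/k,\sigma^2)$; hence $\epsilon'$ is bounded by the quantity in \cref{eq:epsilon_bound}. Since $D \mapsto \vec z$ is a bijective post-processing (scaling by $2/k$) of $D \mapsto \tfrac k2 \vec z$, it satisfies the same RDP bound, and one further application of post-processing propagates it to the whole algorithm.

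I expect the main obstacle to be the sensitivity argument: carefully justifying that a single addition or removal perturbs the top-$k$ nearest-neighbor set by at most one element --- including tie-breaking in the max-inner-product ranking and the degenerate regime where the subsample contains fewer than $k$ points --- and being explicit that the relevant adjacency is addition/removal, which is what makes $2/k$ (rather than $4/k$) the correct sensitivity and lets the SGM amplification apply verbatim.
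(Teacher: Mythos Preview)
Your proposal is correct and follows essentially the same route as the paper: isolate $\vec z$ as the only data-dependent quantity, recognize it as a Sampled Gaussian Mechanism with $\ell_2$-sensitivity $2/k$ coming from the fact that a single-point change perturbs the top-$k$ set in at most one slot, and invoke the SGM--RDP bound together with post-processing. The paper's writeup couples the subsampling randomness and argues replacement adjacency directly before citing \citet{mironov2019r}, whereas you work under add/remove adjacency and explicitly rescale to unit sensitivity to match the statement of the SGM theorem --- this is a presentational difference, not a different argument, and your version is arguably cleaner about why the cited theorem applies verbatim.
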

\begin{proof}
We first prove the DP guarantee for replacement adjacency; the proof for leave-one-out adjacency is almost identical. Let $D_1$ and $D_2$ be neighboring datasets with $D_1 \setminus D_2 = \vec{x}_1, D_2 \setminus D_1 = \vec{x}_2$ for some $\vec{x}_1,\vec{x}_2$. For any query $\mathbf{y}$, let $D_1'$ (resp. $D_2'$) be the Poisson subsampled subset of $D$ and $N_{1,k}$ (resp. $N_{2,k})$ be the k-NN set from \cref{alg:knn_aug_dp}.

Assume that the sampling of $D_1'$ and $D_2'$ share the same randomness. Then, with probability $1-q$ we have that $\vec{x}_1 \notin D_1'$ and $\vec{x}_2 \notin D_2'$, hence $N_{1,k} = N_{2,k}$ and the distribution of $\mathbf{z}$ is identical under both $D_1$ and $D_2$. With probability $q$ we have $\vec{x}_1 \in D_1'$ and $\vec{x}_2 \in D_2'$. In this case, if $\vec{x}_1 \notin N_{1,k}$ and $\vec{x}_2 \notin N_{2,k}$ then $N_{1,k} = N_{2,k}$ and again the distribution of $\mathbf{z}$ is identical under both $D_1$ and $D_2$. Otherwise, without loss of generality assume that $\vec{x}_1 \in D_1'$. Since $\| \vec{x} \|_2 = 1$ for all $\vec{x} \in D_1,D_2$, we have that:
\begin{equation*}
  \left\| \frac{1}{k} \left ( \sum_{j=1}^k N_{1,k}^{(j)} \right ) - \frac{1}{k} \left ( \sum_{j=1}^k N_{2,k}^{(j)} \right ) \right\|_2 = \frac{1}{k} \| \vec{x}_1 - \vec{x} \|_2 \leq \frac{2}{k} \enspace ,
\end{equation*}
for some $\vec{x} \in D_2$. Thus Alg. \ref{alg:knn_aug_dp} is a subsampled Gaussian mechanism with $L_2$ sensitivity of $2/k$. Eq. \ref{eq:epsilon_bound} holds by Theorems 4 and 5 of \citet{mironov2019r}.
\end{proof}

The statement of Theorem \ref{thm:dp_guarantee} gives the RDP parameter $\epsilon$ in terms of the R\'{e}nyi divergence between a Gaussian and a mixture of Gaussians. In practice, this divergence can be computed numerically using popular packages such as Opacus~\cite{yousefpour2021opacus} and \href{https://github.com/tensorflow/privacy}{TensorFlow Privacy}.

\paragraph{Privacy-utility analysis.}
The privacy-utility trade-off of Alg.~\ref{alg:knn_aug_dp} depends on three factors: \textbf{1.} the number of neighbors $k$, \textbf{2.} the Gaussian noise magnitude $\sigma$, \textbf{3.} and the quality of retrieved embeddings $N_k$. Ideally, $k$ is large and all $k$ retrieved embeddings are relevant to the text query $\vec{y}$. This ensures that the privacy loss is low according to Theorem \ref{thm:dp_guarantee} and the generated image is faithful to the text query.

\begin{wrapfigure}{r}{0.55\textwidth}
    \centering
    \vspace{-3ex}
    \includegraphics[width=1\linewidth]{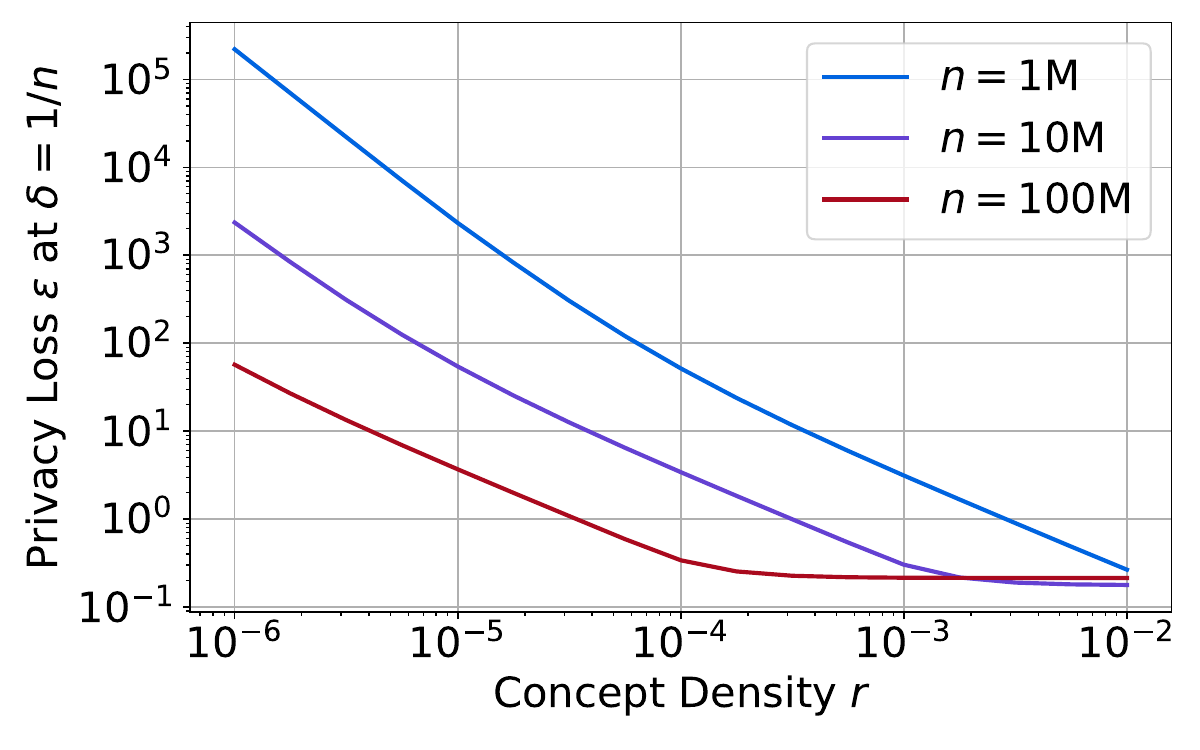}
    \vspace{-5ex}
    \caption{
    Privacy loss $\epsilon$ for generating $1,000$ images of certain concepts with concept density $r$ in log-log scale. For concepts with high concept density, it is possible to generate a large number of high-quality images under low privacy cost.
    }
    \vspace{-2ex}
    \label{fig:density_vs_epsilon_100M}
\end{wrapfigure}

To analyze this trade-off more concretely, for a given text query $\mathbf{y}$, we introduce the quantity $r \in [0,1]$ that represents the density of the concept encoded by $\mathbf{y}$ in the private retrieval dataset $D$. For example, the query ``an image of the Eiffel Tower'' likely has higher density than the query ``an image of the Eiffel Tower from 1929''. Suppose $n \triangleq |D|$ and there are at least $rn$ samples in $D$ that are relevant to the query. Under a subsampling rate of $q$, for all $k$ neighbors to be relevant, we must have $k \leq rqn$. For a fixed $\sigma$ and number of queries $T$, we can then minimize Eq. \ref{eq:epsilon_bound} over $k,q$ subject to this constraint to determine the minimum $\epsilon$ for answering $T$ queries faithfully. Intuitively, when $n$ increases, the space of $k,q$ becomes larger and it becomes possible to attain a lower $\epsilon$.

We simulate this trade-off curve for a retrieval dataset of size $n=$ 1M, 10M, 100M under a fixed $\sigma=0.1$. \cref{fig:density_vs_epsilon_100M} shows the privacy loss $\epsilon$ for generating $1,000$ images of a certain concept with density $r$. As density increases, $\epsilon$ drastically decreases as expected. At the same time, increasing the retrieval dataset size from 1M to 100M can reduce the privacy loss by as much as three orders of magnitude for rare concepts. Notably, for a common concept with density $r=10^{-3}$, $\epsilon$ can be as low as $0.21$ when $n =$ 100M. This analysis shows that the privacy-utility trade-off when generating images of common concepts using DP-RDM can be desirable, especially when scaling up the retrieval dataset.

\section{Results}
\label{sec:experiments}

To demonstrate the efficacy of our DP-RDM algorithm, we evaluate its text-to-image generation performance on several large-scale image datasets.

\subsection{Experiment Design}
\label{sec:exp_design}

\subsubsection{Datasets}

\paragraph{Evaluation datasets.} We evaluate on three image datasets: CIFAR-10~\citep{krizhevsky2009learning}, MS-COCO 2014~\citep{lin2014microsoft} with face-blurring \citep{yang2021imagenetfaces}, and Shutterstock, a privately licensed dataset of 239M image-caption pairs. Both MS-COCO and Shutterstock contain pairs of images and corresponding text descriptions, and we evaluate by sampling validation image-text pairs and using the text description to perform text-to-image generation.
For CIFAR-10, we perform class-conditional image generation using the text prompt \texttt{[label], high quality photograph} where \texttt{[label]} is replaced by one of the CIFAR-10 classes.

\paragraph{Retrieval dataset.} We consider ImageNet (with face-blurring), MS-COCO (with face-blurring), and Shutterstock as retrieval datasets. We treat ImageNet as a public dataset both for training the RDM as well as for retrieval at generation time, while MS-COCO and Shutterstock serve as private retrieval datasets. Going forward, all references to MS-COCO refer to the 2014 version with face blurring. Consequently, when the query interpolation parameter $\lambda = 0$, we leverage only public retrieval from ImageNet.

\subsubsection{Models and Baselines}
We follow the training algorithm detailed in \citet{blattmann2022retrieval} (with minor changes) to obtain two 400M parameter RDMs: \textit{RDM-fb} and \textit{RDM-adapt}. For \textit{RDM-fb}, we use the face-blurred ImageNet dataset and MetaCLIP instead of CLIP. Note that MetaCLIP is also trained with face-blurring~\citep{xu2023metaclip}. For \textit{RDM-adapt}, we further introduce noise in the retrieval mechanism during training as described in \cref{sec:model_training}, which improves the retrieval mechanism's robustness to added DP noise at generation time.

The model checkpoints above can be used in two different manners: with public retrieval (PR; \emph{c.f.} \cref{fig:rdm}) or DP retrieval with private $k$-NN (DPR; \emph{c.f.} \cref{fig:arch_sample_generation}). Applying the two retrieval methods to different retrieval datasets yields several methods/baselines:
\begin{enumerate}[nosep, leftmargin=*]
    \item \textbf{Public-only} = (\RDMFB\ or \RDMADAPT) + public retrieval dataset. This is a baseline that only utilizes the public retrieval dataset (\emph{i.e.} ImageNet), which our DP-RDM method should consistently outperform.
    \item \textbf{Non-private} = (\RDMFB\ or \RDMADAPT) + private retrieval dataset. This baseline uses public retrieval on a private dataset and is thus non-private. The performance of this baseline serves as a reference point that upper bounds the performance of DP retrieval methods.
    \item \textbf{DP-RDM} = (\DPRDMFB\ or \DPRDMADAPT) + private retrieval dataset. This is our method applied to the two RDM models.
\end{enumerate}

When using a private retrieval dataset at sample generation time, the models are evaluated using the method described in \cref{alg:knn_aug_dp}. For our experiments, we fix $\epsilon$, the query budget, $T$, and evaluate a range of neighbors $k$, subsample rate $q$ and private noise $\sigma$. For experiments with \DPRDMADAPT, we fix $\sigma$ and do binary search to find $k$ (for fixed $q$) or $q$ (for fixed $k$).
For each method, we generate 15k samples, one for each prompt from a hold-out set.

\subsubsection{Evaluation Metrics}
We measure quality, diversity and prompt-generation consistency for generated images using the Fréchet inception distance (FID) score~\citep{heusel2017gans}, CLIPScore~\citep{hessel2021clipscore}, density and coverage~\citep{naeem2020reliable}. 
The FID score measures the Fréchet distance between two image distributions by passing each image through an Inception V3 network \citep{szegedy2015going}. A lower FID score is better. The \textit{CLIPScore} quantifies how close the generated samples are to the query prompt where higher is better.  \textit{Density} and \textit{coverage} are proxies for image quality and diversity \citep{naeem2020reliable}. Both of these metrics require fixing a neighborhood of $K$ samples and measuring the nearest-neighbor distance between samples, where $\text{NND}_K $ computes the distance between sample $R_i$ and the $K^{\text{th}}$ nearest neighbor. Let $B(w, u)$ be the ball in $\mathbb{R}^d$ around point $w$ with radius $u$. 
Given real samples $R$ and fake samples $F$, then density for fixed $K$, is defined as, 
\begin{equation*}
    \text{density} \triangleq \frac{1}{K F} \sum_{j=1}^{F} \sum_{i=1}^{R} \mathbf{1}_{F_j \in B(R_i, \text{NND}_K(R_i))} \enspace .
\end{equation*}
Note that it is possible for $\text{density} > 1$. Coverage measures the existence of a generated sample near a real sample and is bounded between 0 and 1. 
Coverage is defined as,
\begin{equation*}
    \text{coverage} \triangleq \frac{1}{R} \sum_{i=1}^R \mathbf{1}_{\exists\  j \text{ s.t. } F_j \in B(R_i, \text{NND}_K(R_i))} \enspace .
\end{equation*}
We fix $K=5$ when computing coverage and density in our experiments.

\subsection{Quantitative Results}
\begin{figure*}[t]
    \centering
    \includegraphics[width=1\linewidth]{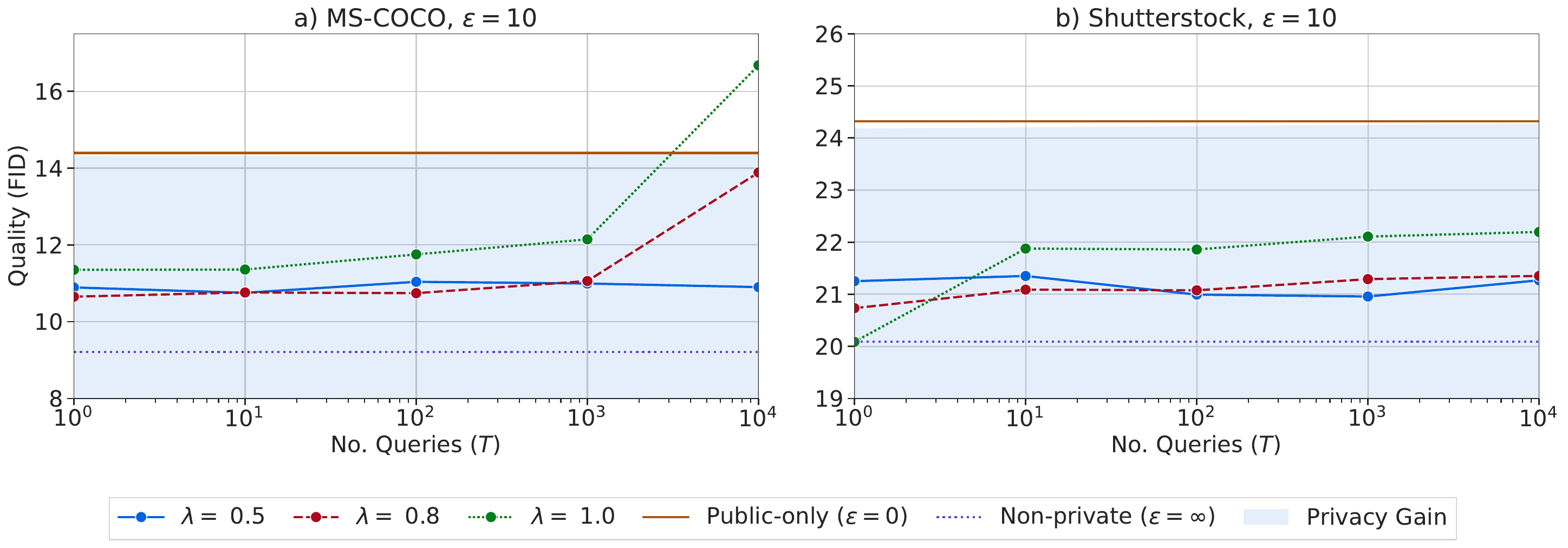}
    \caption{FID score for images generated by DP-RDM under a privacy budget of $\epsilon = 10$. %
    With both MS-COCO and Shutterstock retrieval datasets we observe a privacy gain for a wide range of number of private queries.}
    \label{fig:main_eps10}
\end{figure*}

\paragraph{FID score.} We first demonstrate the performance of DP-RDM quantitatively in terms of the Fréchet inception distance (FID) score. \cref{fig:main_eps10} shows the FID score for DP-RDM on MS-COCO and Shutterstock when answering $T$ queries privately for $T$ ranging from 1 to 10k. The evaluation dataset and the private retrieval dataset are the same, \emph{i.e.} for \cref{fig:main_eps10}(a) we treat the MS-COCO validation set as the evaluation dataset and MS-COCO training set as the private retrieval dataset. We make the following observations.
\begin{itemize}[nosep, leftmargin=*]
    \item \emph{DP-RDM achieves a desirable trade-off between model utility and privacy.} The public-only (red line) and non-private (dotted blue line) baselines define the region of interest for the FID score of DP-RDM. Any region below the non-private line is unattainable, and any region above the public-only line provides no privacy-utility benefit compared to using only the public data. For most settings of the interpolation parameter $\lambda$ and number of queries $T$, the FID score of DP-RDM sits comfortably within this region and is closer to the non-private baseline. At a high level, \cref{fig:main_eps10} suggests that at $\epsilon=10$, DP-RDM can obtain a desirable privacy-utility trade-off even for as many as 10k queries.
    \item \emph{Query interpolation helps improve model utility when answering a large number of queries.} Query interpolation with $\lambda=1$ uses only the retrieved aggregated embedding from the private retrieval dataset, which is injected with Gaussian noise to ensure differential privacy. As a result, FID suffers when answering a large number of queries. In comparison, by interpolating with the public retrieval dataset (\emph{i.e.}, $\lambda=0.5,0.8$), we see that DP-RDM retains a low FID even when answering as many as 10k queries.
    \item \emph{Increasing the size of the retrieval dataset greatly improves model utility.} The analysis in \cref{fig:density_vs_epsilon_100M} suggests that scaling up the retrieval dataset is crucial for maintaining a good privacy-utility trade-off. Our result validates this hypothesis concretely: FID score for Shutterstock stays flat when ranging $T$ from 1 to 10k, whereas FID score for MS-COCO increases sharply when $T=$ 10k for $\lambda=0.8,1$. Our DP-RDM architecture is amenable to the use of a large private retrieval dataset such as Shutterstock, whereas using such internet-scale high-fidelity datasets for DP training/finetuning of state-of-the-art generative models is out of reach at the moment.
\end{itemize}
See \cref{sec:cifar_results} for similar results with CIFAR-10, and \cref{sec_appendix:quant_results} for detailed results with $\epsilon = \{ 5, 10, 20 \}$.

\begin{figure}[t]
    \centering
    \begin{subfigure}[h]{0.32\textwidth}
         \includegraphics[width=1\linewidth]{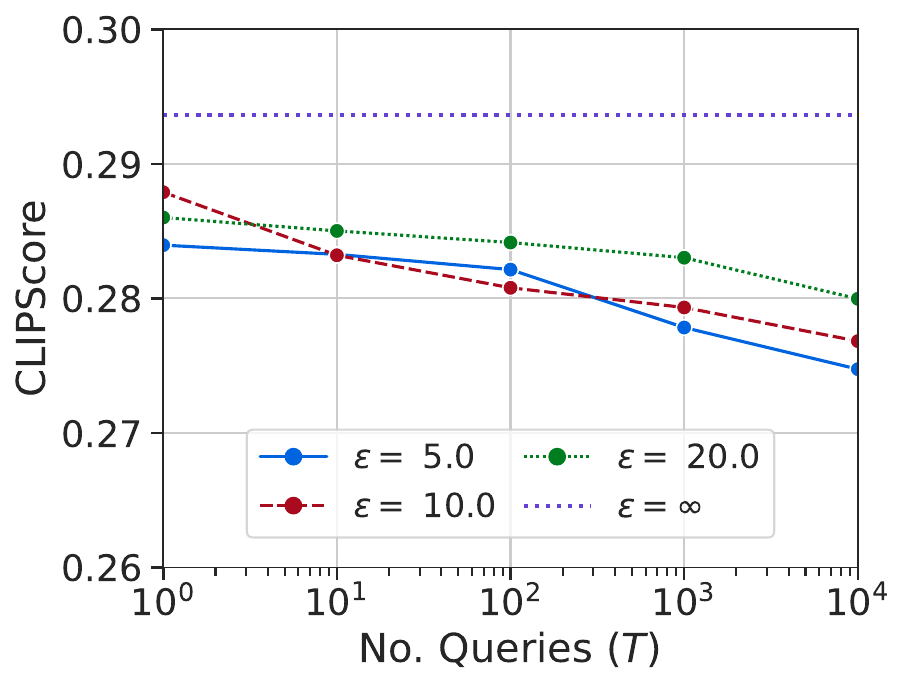}
         \label{fig:coco_clipscore}
         \caption{CLIPScore}
    \end{subfigure}
    \hfill
    \begin{subfigure}[h]{0.32\textwidth}
         \includegraphics[width=1\linewidth]{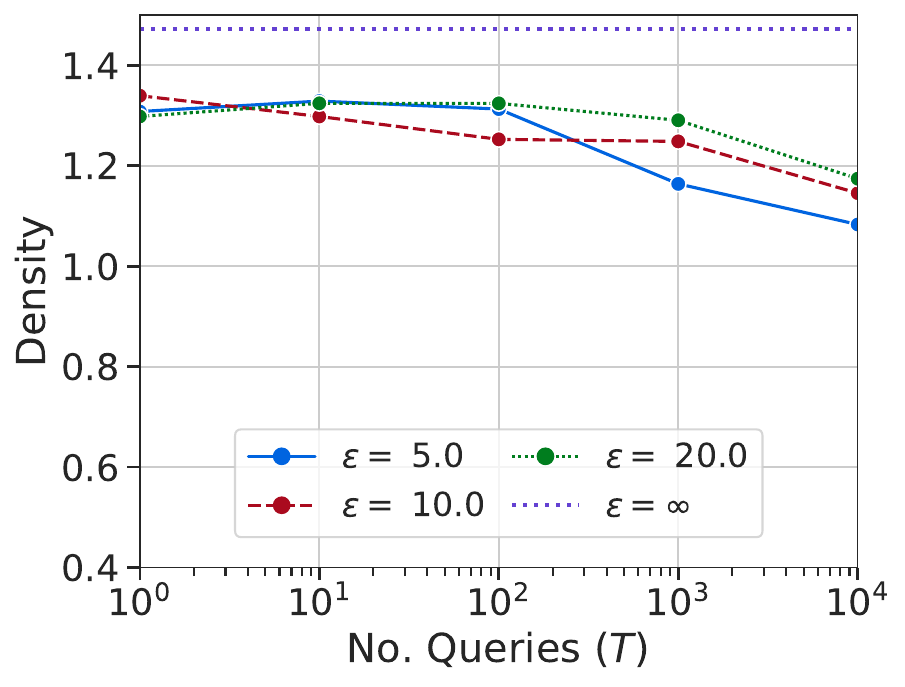}
        \caption{Density}
        \label{fig:coco_density}
    \end{subfigure}
    \hfill
    \begin{subfigure}[h]{0.32\textwidth}
        \includegraphics[width=1\linewidth]{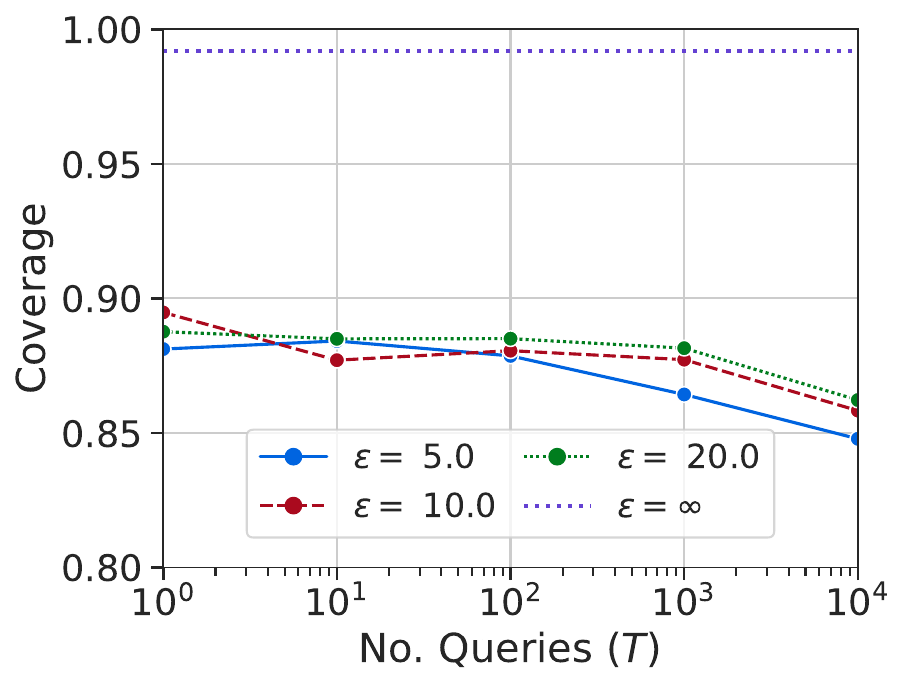}
        \caption{Coverage}
        \label{fig:coco_coverage}
    \end{subfigure}
    \caption{Quantitative evaluation of DP-RDM on MS-COCO using other metrics. CLIPScore measures the degree of semantic alignment between the text prompt and the generated images, whereas density and coverage reflect the distribution-level similarity between images generated from validation prompts and ground-truth validation images. $\epsilon=\infty$ denotes the non-private baseline.
    }
    \label{fig:coco_other_metrics}
\end{figure}

\paragraph{Other metrics.} In \cref{fig:coco_other_metrics} we plot the other quantitative metrics, namely CLIPScore, density and coverage, as a function of the number of queries for different values of $\epsilon$ on MS-COCO. See \cref{fig:st_other_metrics} in \cref{sec:more_exp_shutterstock} for corresponding plots for Shutterstock. CLIPScore reflects the degree of semantic alignment between the text prompt and the generated image, whereas density and coverage reflect the distribution-level similarity between images generated from validation prompts and ground-truth validation images.
As expected, all three metrics decrease as the number of queries increases. For the coverage metric, the gap between DP-RDM and the non-private baseline is particularly large, which suggests that DP-RDM may be suffering from mode-dropping.

\subsection{Qualitative Results}
In this section, we showcase prompts and privately generated image samples using \emph{RDM-adapt} with DP retrieval. We sample 15k validation prompts from Shutterstock and MS-COCO and use \DPRDMADAPT\ with $\epsilon=10, T=100$ to generate corresponding images, and then select the top 12 for each dataset according to CLIPScore. \cref{fig:samples-tbl-captions} shows the selected prompts and \cref{fig:samples-img-prompt} shows the generated images.
Intriguing, DP-RDM is able to generate images of concepts that are unknown to \emph{RDM-adapt} at training time by utilizing the private retrieval dataset. For example, the prompt ``abstract light background'' (prompt 9 for Shutterstock) generates a surprisingly coherent image that is unlikely to appear in the ImageNet dataset. These results suggest that DP-RDM can successfully adapt to a new private image dataset under a strong DP guarantee.
In \cref{sec:appendix_failure_cases} we also show prompts and generated images that scored the lowest in CLIPScore as failure cases.\looseness-1

\begin{smaller}
\begin{figure}[H]
    \begin{subfigure}[h]{0.45\textwidth}
            \centering
            \begin{tabular}{p{0.5cm}|p{6.5cm}}
\toprule
Ref & Validation Prompt \\
\midrule
1 & image of blue sky with white cloud for background usage. \\
2 & wood texture paint yellow \\
3 & School of Jackfish in Sipadan Malaysia  \\
4 & A bunch of garlic isolated on white background \\
5 & Halloween pumpkins on wooden table on dark color background \\
6 &  cauliflower \\
7 & Closeup of golden art sculpture pagoda and golden tiered umbrella in temple against with blue sky in north of Thailand. Faith and religion concept. \\
8 & White stork hunting in a meadow \\
9 & abstract light background    \\
10 & White grunge wood close-up background. Painted old wooden wall. \\
11 & Multicolored abstract background holidays lights in motion blur image. illustration digital. \\
12 & Torn blue jean fabric texture. \\
\bottomrule
\end{tabular}

            \caption{Shutterstock.}
      \end{subfigure}
      \hfill
    \begin{subfigure}[h]{0.45\textwidth}
      \centering
      \begin{tabular}{p{0.5cm}|p{6.5cm}}
\toprule
Ref & Validation Prompt \\
\midrule
1 & A dog watching a television that is displaying a picture of a dog. \\
2 & The Asian girls are sharing conversation under a parasol. \\
3 & An aerial view of several jumbled cars on a narrow road. \\
4 & a short passenger train on a track by a grassy hill \\
5 & A kitchen scene with focus on the microwave and oven. \\
6 & An empty wooden bench near a large body of water. \\
7 & A brown bear on two legs is near the water. \\
8 & An adult and a baby elephant crossing a road. \\
9 & A man rides a motorcycle on a track. \\
10 & A bear  that is in the grass in the wild. \\
11 & A black-and-white photo of a large bench on the sidewalk. \\
12 & A view of a sink and toilet in a bathroom in the dark. \\
\bottomrule
\end{tabular}

      \caption{MS-COCO.}
    \end{subfigure}
    \caption{Text prompts for generated samples with high CLIPScore for \DPRDMADAPT.}
    \label{fig:samples-tbl-captions}
\end{figure}
\end{smaller}
\vspace{-2em}
\begin{figure}[H]
    \begin{subfigure}[h]{0.41\textwidth}
            \centering
            \includegraphics[width=1\linewidth]{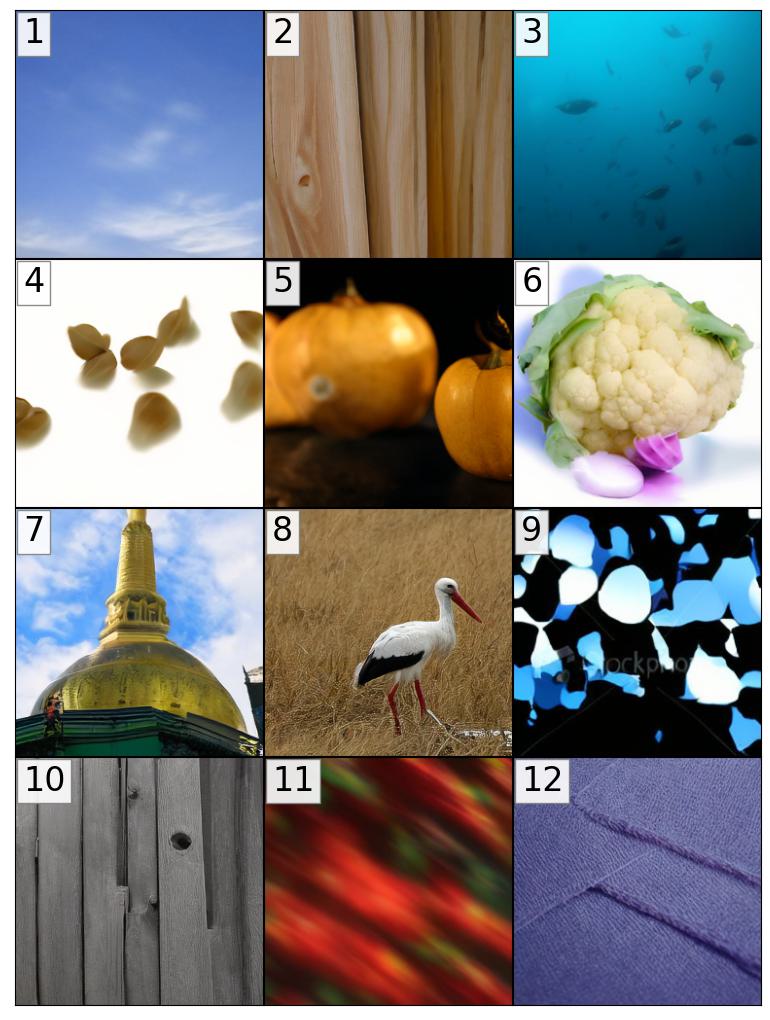}
            \caption{Shutterstock, ($k=17, \lambda=0.8, q=0.001$).}
      \end{subfigure}
      \hfill
    \begin{subfigure}[h]{0.41\textwidth}
      \centering
      \includegraphics[width=1\linewidth ]{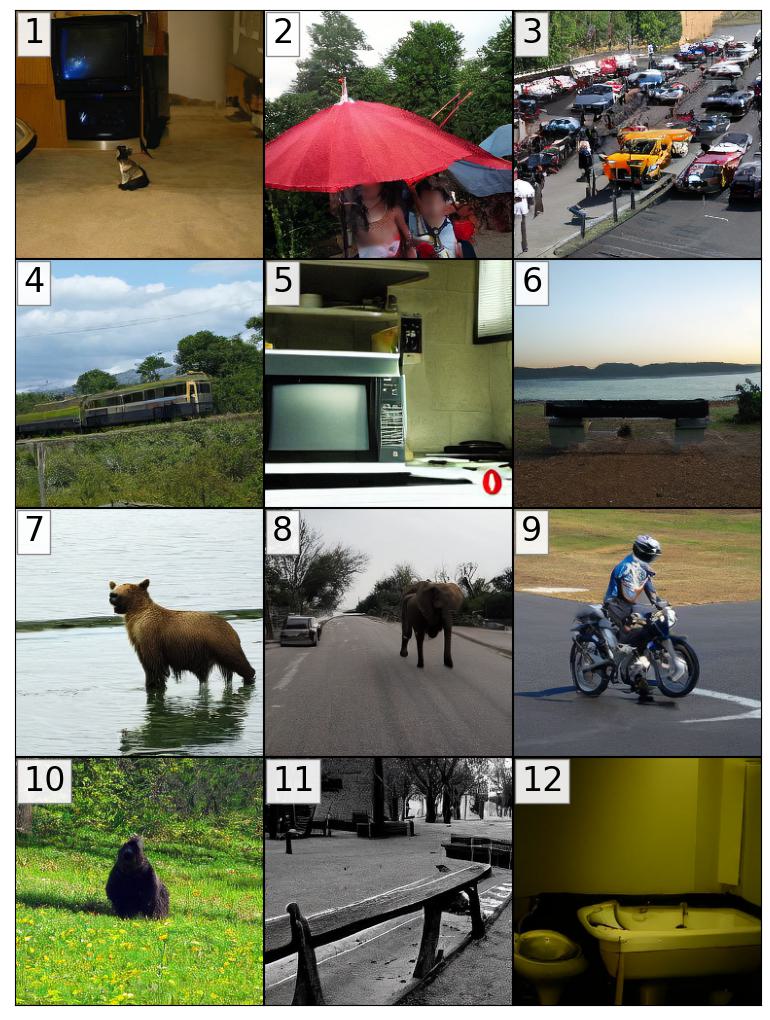}
      \caption{MS-COCO, ($k=19, \lambda=0.8, q=0.01$).}
    \end{subfigure}
    
    \caption{Images generated using \DPRDMADAPT\ with privacy budget $\epsilon=10$ under 100 queries.}
    \label{fig:samples-img-prompt}
\end{figure}

\subsection{Ablations} 

\paragraph{Effect of adapted RDM training.} In \cref{sec:model_training} we detailed changes we made to the RDM training pipeline to better adapt it to our DP-RDM algorithm. The most critical change is the addition of Gaussian noise in the aggregated embeddings during training, which helps the RDM to learn to extract weak signals from noisy embeddings. \cref{fig:plot_mscoco_fid_dprdm1} shows the FID score on MS-COCO when using the non-adapted \texttt{RDM-fb} model in DP-RDM. It is evident that without adaptation, the model is unable to generate coherent images when the number of queries $T$ increases.

\paragraph{Effect of the parameter $k$.} The number of retrieved neighbors $k$ plays a critical role in controlling the privacy-utility trade-off for DP-RDM. When $k$ is large, the privacy cost $\epsilon$ decreases at the expense of aggregating embeddings retrieved from irrelevant neighbors.
In \cref{fig:dbnorm_mscoco} and \cref{fig:dbnorm_shutterstock}, we plot the histogram of computed $L_2$ norms on a fixed sample of MS-COCO validation prompts. Intuitively, a high $L_2$ norm indicates strong semantic coherence between the retrieved neighbors, hence the generated image is more likely to be faithful to the text prompt.
Compared to MS-COCO retrieval in \cref{fig:dbnorm_mscoco}, Shutterstock is less susceptible to the negative effect of aggregating across a large number of neighbors, despite the validation prompts coming from MS-COCO validation. We suspect that this is primarily due to the size of Shutterstock being $\approx 2,900$ times larger than the MS-COCO retrieval dataset, and partially explains the strong performance of Shutterstock in \cref{fig:main_eps10}.

\begin{figure}[t]
    \begin{subfigure}{0.29\textwidth}
    \centering
    \includegraphics[width=1\linewidth]{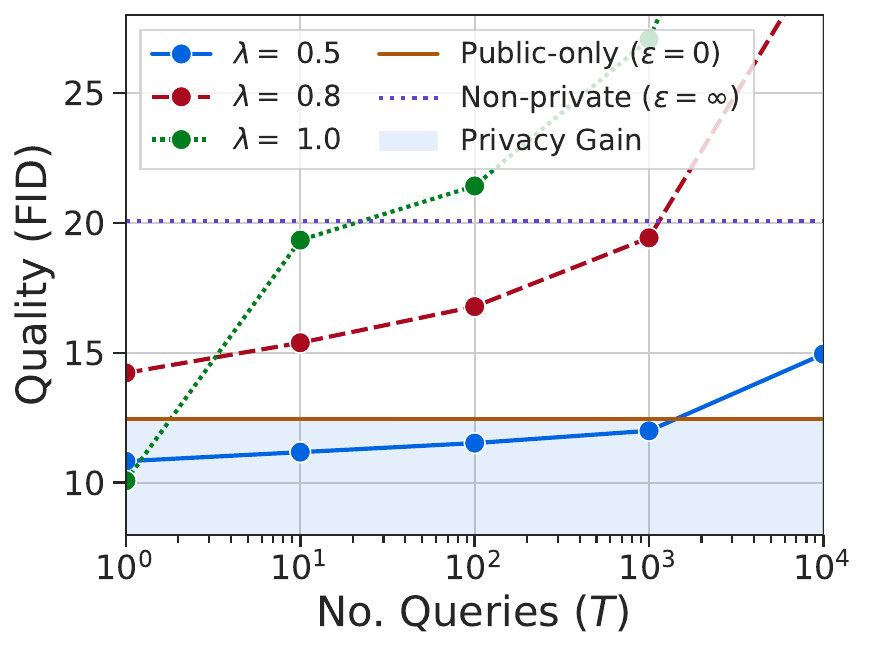}
    \caption{MS-COCO (\DPRDMFB)\looseness-1}  
    \label{fig:plot_mscoco_fid_dprdm1}
    \end{subfigure}
    \hfill
    \begin{subfigure}{0.34\textwidth}
    \centering
        \includegraphics[width=1\linewidth]{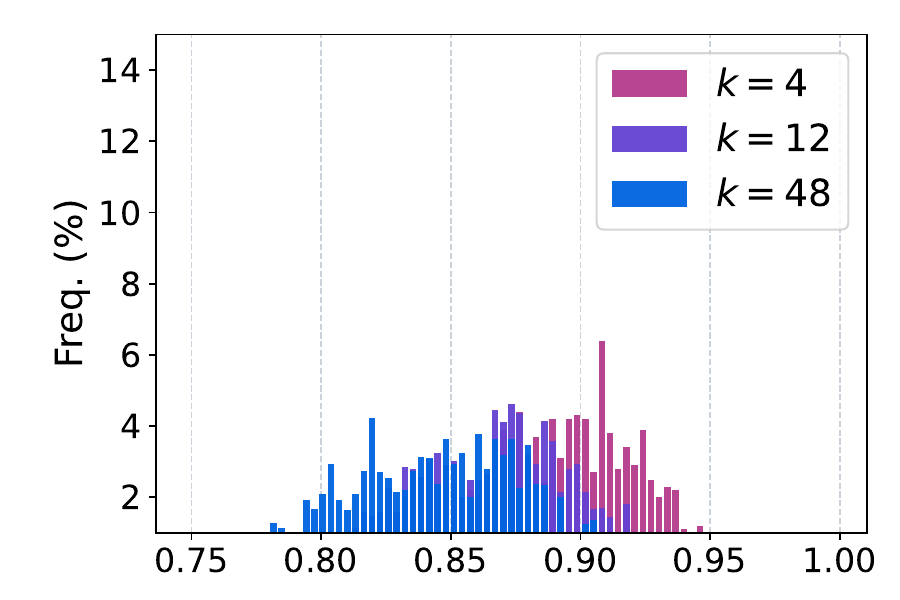}
    \caption{MS-COCO $L_2$ norm}
    \label{fig:dbnorm_mscoco}
    \end{subfigure}
    \hfill
    \begin{subfigure}{0.34\textwidth}
    \centering
        \includegraphics[width=1\linewidth]{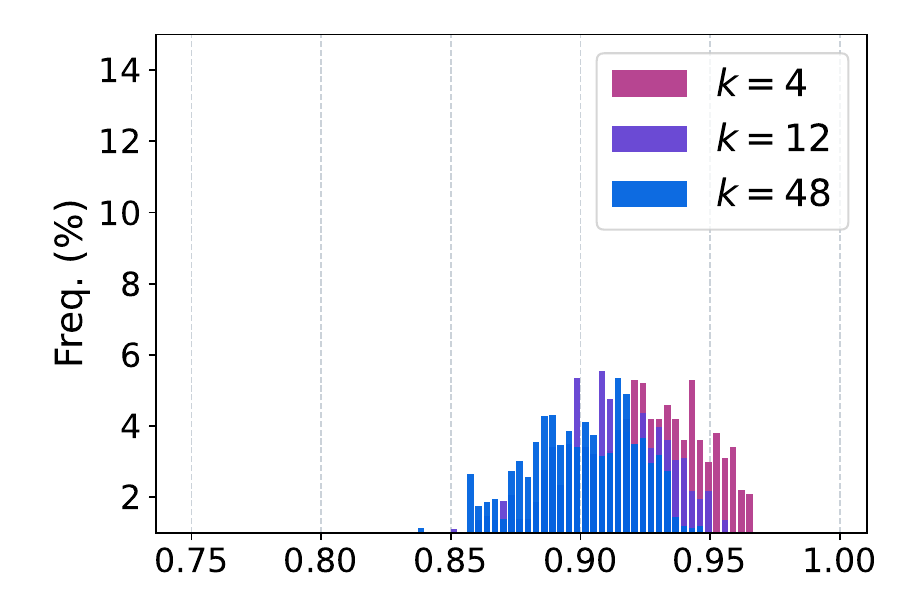}
    \caption{Shutterstock $L_2$ norm}
    \label{fig:dbnorm_shutterstock}
    \end{subfigure}
    \caption{
Ablation studies. \textbf{(a)} FID score vs. number of queries when using DP-RDM with the \texttt{RDM-fb} model. Without adapting the RDM to noise in the retrieved embeddings at training time, the model is uncompetitive compared to the public-only baseline. \textbf{(b,c)} Histogram of the $L_2$ norm of aggregated embeddings across different validation prompts. As $k$ increases, more irrelevant neighbors are included in the aggregated embedding, which reduces the retrieved embedding's $L_2$ norm.
    }
    \label{fig:retrieval_db_norm}
\end{figure}

\section{Conclusion}
We showcased DP-RDM, the first differentially private retrieval-augmented architecture for text-to-image generation. DP-RDM enables adaptation of a diffusion model trained on public data to a private domain without costly fine-tuning. By scaling up the retrieval dataset, DP-RDM can generate a large number of high-quality images (as many as 10k) under a fixed privacy budget, thereby advancing the state-of-the-art in DP image generation.

\paragraph{Limitations.} While our method achieves competitive results for DP image generation, there are some clear limitations and opportunities for future work.
\begin{enumerate}[nosep, leftmargin=*]
    \item The current privacy analysis is based on worst-case assumptions on the query and retrieval dataset. Variants of DP such as individual-level DP~\citep{feldman2021individual} offer more flexible privacy accounting on a sample-level basis, which is beneficial to DP-RDM as it can assign a different privacy budget for each sample and expend it based on the query~\citep{zhu2023private}. 
    \item RAG can easily satisfy requirements such as the right-to-be-forgotten for samples in its retrieval dataset. DP-RDM can leverage this property to guarantee both privacy and right-to-be-forgotten at the same time, although further care is needed to ensure that it interfaces well with formal data deletion definitions such as $(\epsilon,\delta)$-unlearning~\citep{ginart2019making, guo2019certified}. 
    \item Our paper focuses on private RAG for images. Can a similar technique be successfully applied to retrieval-augmented language models~\citep{lewis2020retrieval}?
\end{enumerate}

\bibliographystyle{assets/plainnat}
\bibliography{main}

\clearpage
\newpage
\beginappendix

\section{Appendix: Training Details}
\label{sec:app_train_details}

In \cref{fig:training_block_diagram} we detail the training procedure. 

We train on a face-blurred version of ImageNet \citep{krizhevsky2012imagenet, yang2021imagenetfaces} to enhance privacy of the pre-training dataset. While this method is not exhaustive, and in some cases will blur non-human subjects, this method effectively mitigates facial memorization by the trained model. 
Both models (\RDMFB\  and \RDMADAPT\ ) were trained using single patches, center-cropped on ImageNet FB. 

\begin{figure}[ht]
    \centering
    \includegraphics[width=0.4\linewidth]{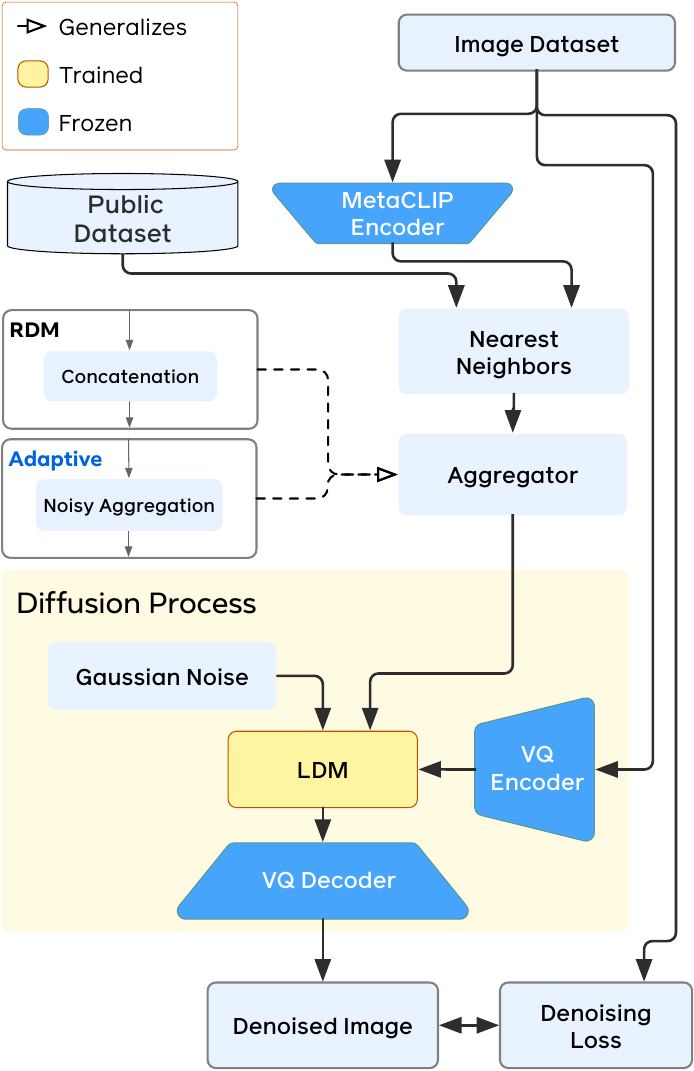}
    \vspace{-2ex}
    \caption{DP-RDM follows a typical training process for text-to-image generative models. The training process can be done entirely with public data. }
    \label{fig:training_block_diagram}
\end{figure}

DP-RDM supports two aggregation methods. \RDMFB\  instantiates the \textit{Concatenation} module, which is the standard approach for training RDM (see \citet{blattmann2022retrieval} for details). Alternatively, DP-RDM can be trained using the same \textit{Noisy Aggregation} approach done during private sample generation. Due to the bound on global sensitivity \textit{at sample generation time}, we ensure embeddings are normalized during training to match the normalization procedure when samples are generated.  

\begin{center}
\begin{figure}
    \centering
    \begin{tabular}{|r|l|l|c|}
    \hline 
        & \textbf{RDM} \citep{blattmann2022retrieval} & \textbf{\RDMFB} & \textbf{\RDMADAPT} \\
    \hline
    Steps & 120k & 110k & 140k \\
    Batch & 1280 & 3840 & 3840 \\
    \hline
    Encoder &   CLIP ViT-B/32 & \multicolumn{2}{l|}{MetaCLIP ViT-B16 (400M) }  \\
    Train Dataset & ImageNet & \multicolumn{2}{l|}{ ImageNet Face-Blurred }  \\
    \hline 
    Added Noise ($\sigma$) & \multicolumn{2}{c|}{0} & 0.05 \\
    Aggregation &      \multicolumn{2}{c|}{No} & Yes \\
    \hline
    \end{tabular}
    
    \caption{Training Setup. Experimental results were performed on models trained with face-blurring (FB) in the CLIP encoder, the conditioning and the training dataset. }
    \label{tbl:training_setup}
\end{figure}
\end{center}

\paragraph{Picking a Noise Magnitude for Training.} Remark that all embeddings are normalized, and therefore the aggregated embedding will be no greater than 1. We therefore consider the highest amount of training noise such that $\sigma \cdot \sqrt{d} \approx 1$. When training \RDMADAPT, $d = 512$ and $\sigma = 0.05$.

\paragraph{Efficient subsampling and nearest neighbor search.}
For large-scale retrieval datasets with millions of samples, it is necessary to adopt fast nearest neighbor search for computational efficiency. To this end, we use the Faiss library~\citep{johnson2019billion}, which is known to support fast inner product search for billion-scale vector databases. To support subsampling, we use the selector array functionality (\texttt{IDSelectorBatch}), which allows one to specify a binary index vector that limits the inner product search to a subset of the full retrieval dataset. We apply product quantization~\citep{jegou2010product} to the vector database in the case of Shutterstock. Note that product quantization requires training as well, which can also leak private information if done on the retrieval dataset itself. For retrieval from CIFAR-10, ImageNet FB, MS-COCO FB and the adversarial dataset, we use a flat index without index training. The Shutterstock index is built using an inverted file index and product quantization (IVFPQ) using an L2 quantizer, $8,192$  centroids, and a code size of $256$.

\paragraph{Privacy Analysis.} In \cref{fig:priv_analysis_100M} we show how DP-RDM is calibrated across a range of neighbors and noise magnitudes. This mechanism can be tuned using in terms of the subsampling rate $q$, number of neighbors $k$ and additive noise $\sigma$. \looseness-1

\begin{figure*}[ht]
    \centering
    \includegraphics[width=1\linewidth]{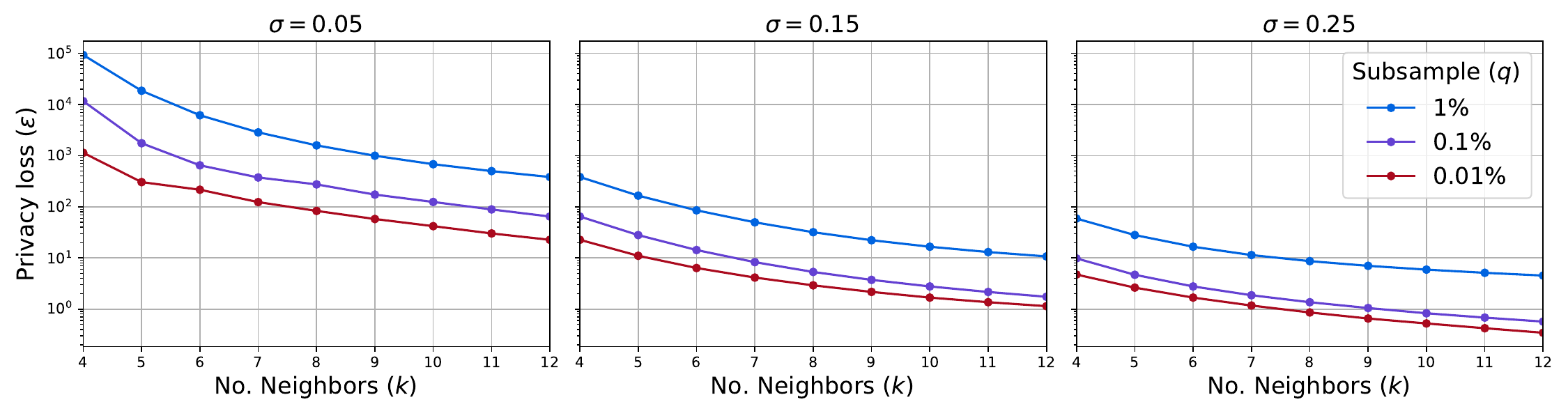}
    \caption{
    DP-RDM privacy analysis for a 100M dataset, over 10k fixed queries using privacy amplification by subsampling.
    }
    \label{fig:priv_analysis_100M}
\end{figure*}

\section{Further Experiment Details} All experiments were performed with $100$ DDIM steps. ImageNet FB, Shutterstock and MS-COCO samples were generatured using a classifier-free guidance scale of $2.0$. CIFAR-10 experiments were done with the guidance scale set to $1.25$. We resize and center crop each image before encoding the image and storing the normalized embedding. To evaluate FID, we use work by \citet{parmar2021cleanfid}. 

\subsection{Shutterstock Metrics}
\label{sec:more_exp_shutterstock}

\begin{figure}[t]
    \centering
    \begin{subfigure}[h]{0.32\textwidth}    
    \includegraphics[width=1\linewidth]{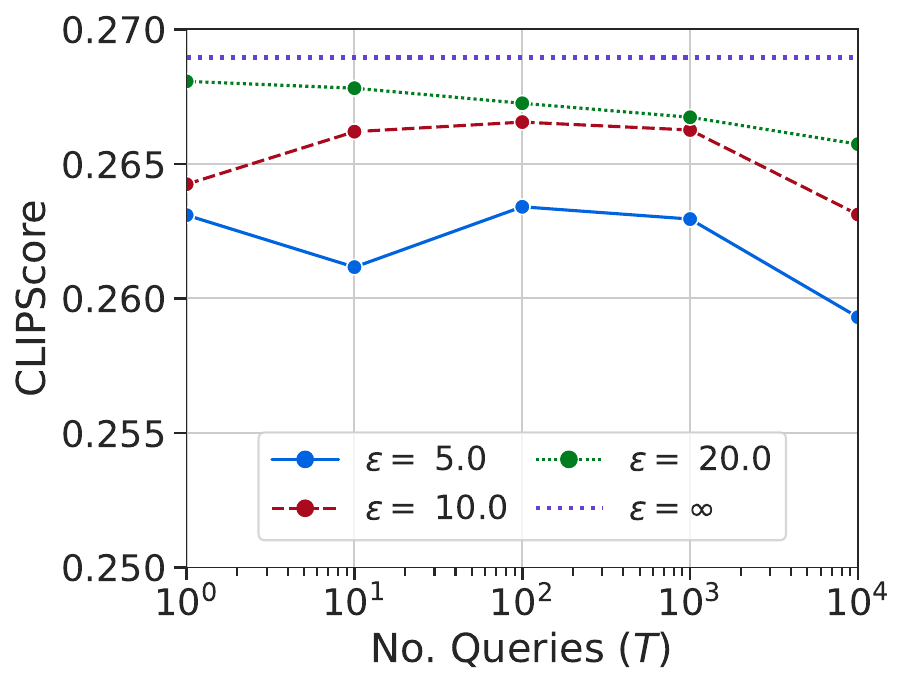}
        \caption{CLIPScore}
        \label{fig:st_clipscore}
    \end{subfigure}
    \hfill
    \begin{subfigure}[h]{0.32\textwidth}
         \includegraphics[width=1\linewidth]{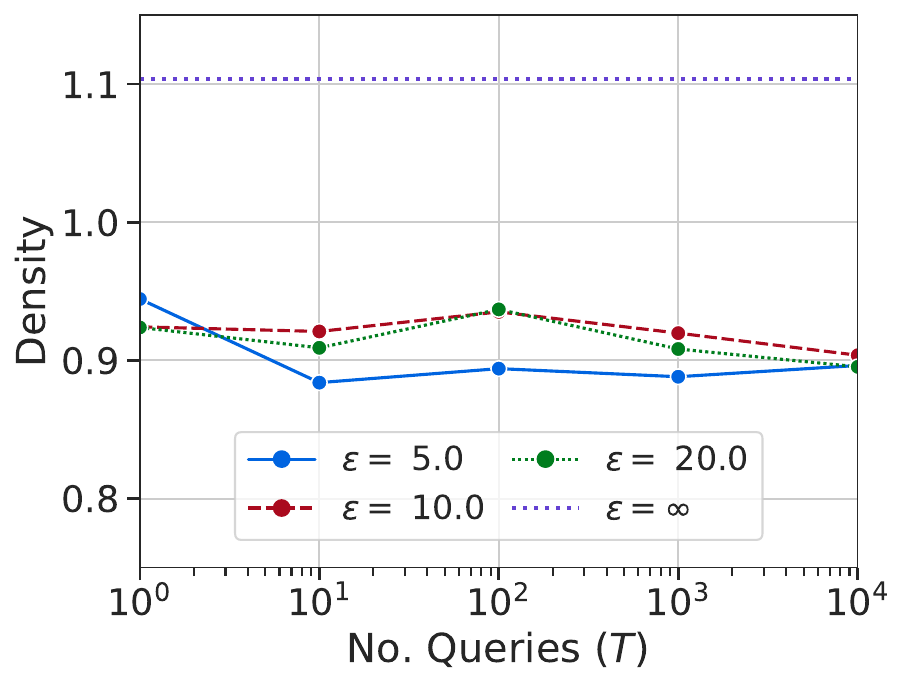}
        \caption{Density}
        \label{fig:st_density}
    \end{subfigure}
    \hfill
    \begin{subfigure}[h]{0.32\textwidth} 
        \centering   \includegraphics[width=1\linewidth]{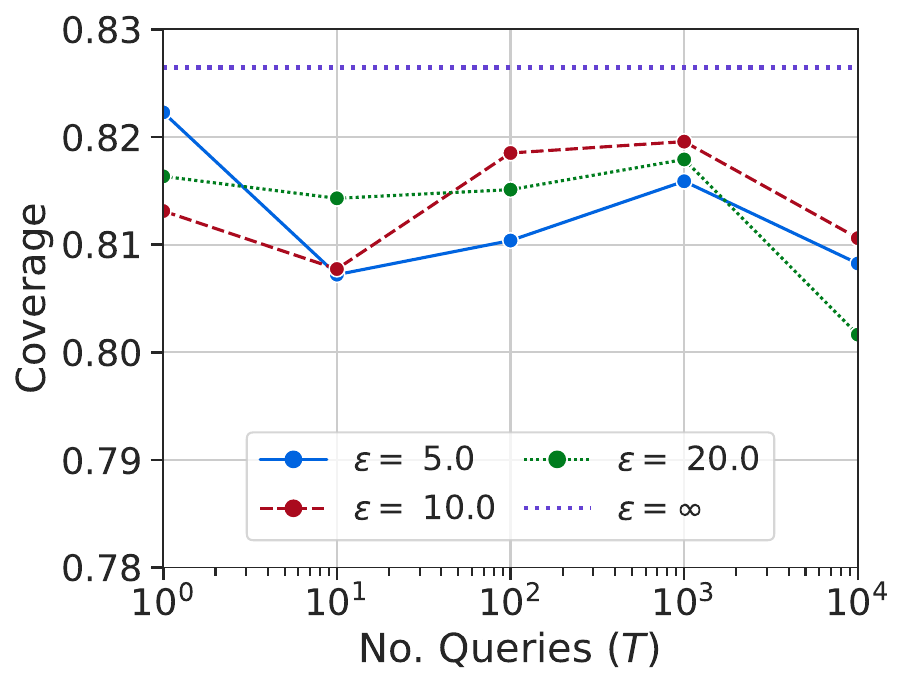}
        \caption{Coverage}
        \label{fig:st_coverage}
    \end{subfigure}
    \caption{Shutterstock Density and Coverage plots over a range of privacy parameters. The dashed line ($\epsilon = \infty$) corresponds with the non-private baseline.} 
    \label{fig:st_other_metrics}
\end{figure}

\subsection{CIFAR-10}
\label{sec:cifar_results}
CIFAR-10 images are $32 \times 32$ pixels. We therefore up-sample them to match our models. 
We found that random subsampling of the retrieval dataset (with $q = 1/100$) significantly improved quality and diversity of the images generated. We generate 30k samples and report their similarity against pre-computed statistics.  At privacy loss $\epsilon = 10$, or no privacy ($\eps = \infty$), we report the best quality score for $k$ on CIFAR-10 over 30,000 generated samples. With private diffusion models, there are two notable fine tuning results. \citet{harder2023pre} reduce the dependency on private data by using public feature extractor to train a private generator on CIFAR-10, and report FID of $26.8$.  Another fine-tuning result comes from \citet{ghalebikesabi2023differentially}, where they report FID of $9.8$ on CIFAR-10. However, both of these methods train on a downsampled version of ImageNet \citep{chrabaszcz2017downsampled}, which limits their capacity to generate high-quality samples.
 
\begin{figure}[H]
    \centering
    \begin{tabular}{llllllll}
\toprule
 &  &  & FID & $\sigma$ & $k$ & $\lambda$ & $q$ \\
Model (CIFAR-10) & $T$ & $\epsilon$ &  &  &  &  &  \\
\midrule
RDM-adapt PR & 0 & $\infty$ & 15.26 & 0.05 & 4 & 1.0 & 0.01 \\
\midrule
\multirow[t]{5}{*}{RDM-adapt DPR} & 1 & 10 & 13.91 & 0.05 & 13 & 1.0 & 0.01 \\
\cline{2-8}
 & 10 & 10 & 14.26 & 0.05 & 16 & 1.0 & 0.01 \\
\cline{2-8}
 & 100 & 10 & 14.34 & 0.05 & 19 & 1.0 & 0.01 \\
\cline{2-8}
 & 1,000 & 10 & 14.52 & 0.05 & 23 & 1.0 & 0.01 \\
\cline{2-8}
 & 10,000 & 10 & 15.19 & 0.05 & 33 & 1.0 & 0.01 \\
\cline{1-8} \cline{2-8}
\bottomrule
\end{tabular}
    
    \caption{CIFAR-10: Tabular Results}
    \label{fig:cifar_tab_results}
\end{figure}

\begin{figure}[H]
    \centering
    \includegraphics[width=0.45\linewidth]{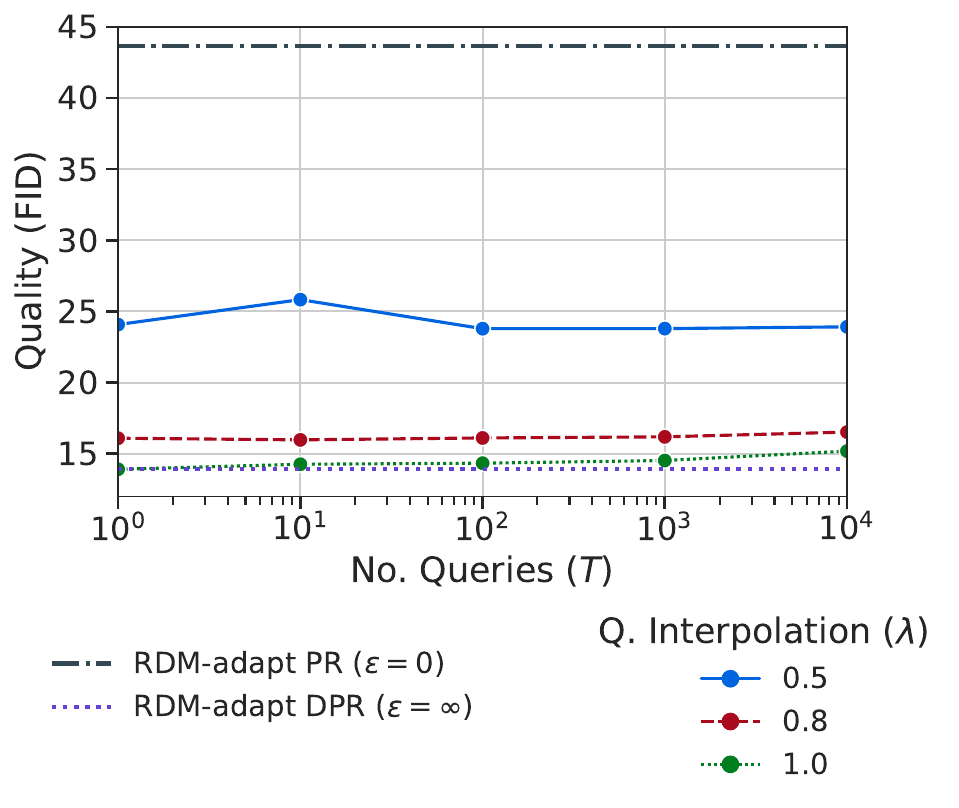}
    \caption{For $\epsilon = 10$, we compare the privacy trade-off for \DPRDMADAPT\ and CIFAR-10 retrieval.}
    \label{fig:plot_cifar10_fid}
\end{figure}

\section{Appendix: More Generated Samples}
\label{sec:appendix_samples}
\begin{figure}[H]
    \centering
    \includegraphics[width=\linewidth]{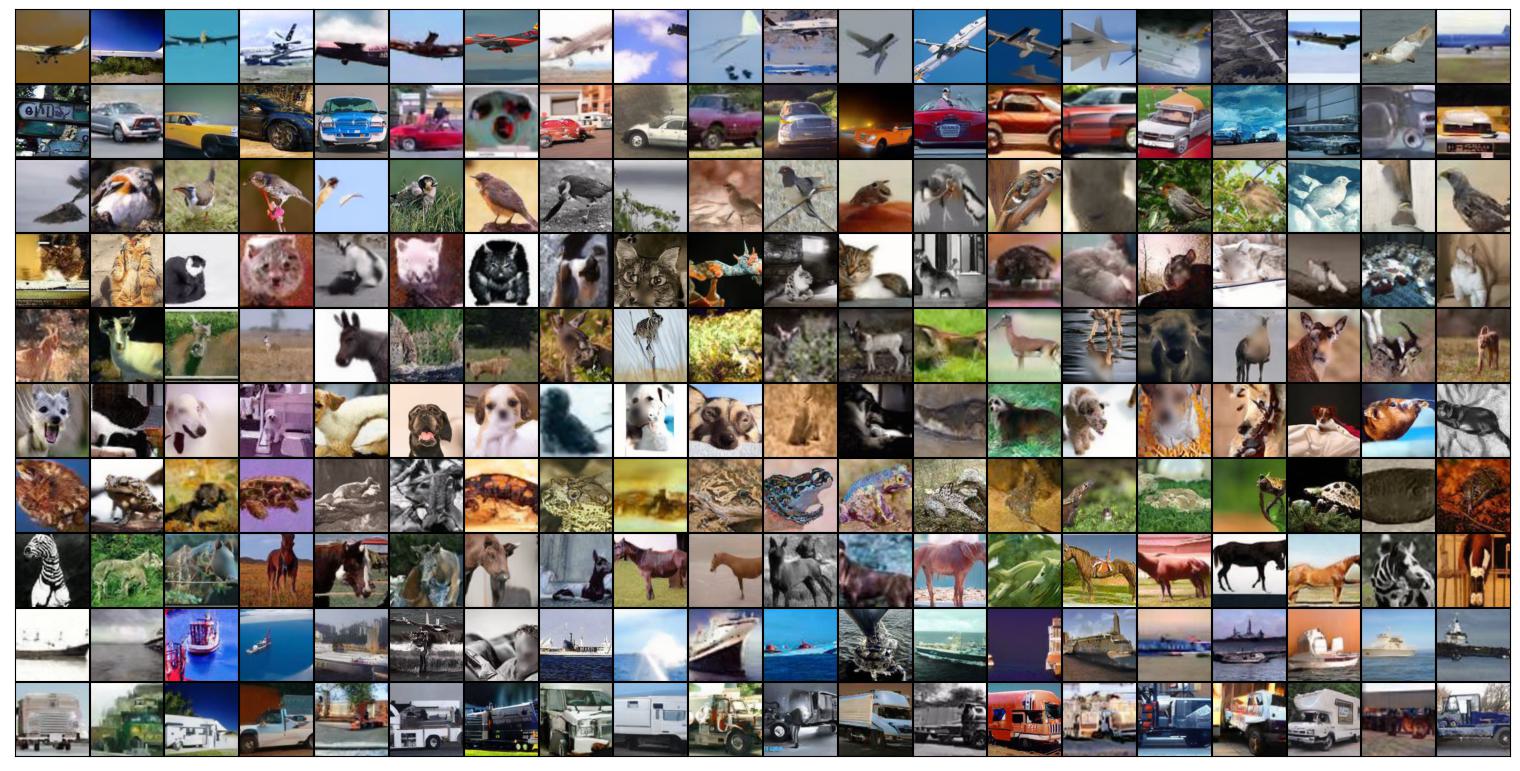}
    \caption{ \DPRDMADAPT\ 
    samples generated for $\epsilon = 10, \delta = 1/n$, where $n = 50,000$ denotes the size of the retrieval datasets (CIFAR-10 train). Here no. neighbors, $k=23$, and the Sub-sampled Gaussian Mechanism is calibrated for 1,000 private queries over random subsets $q = 0.01$.
    }
    \label{fig:samples_cifar}
\end{figure}
\newpage
\subsection{ImageNet (Face Blurred) Results}
\label{sec:imagenet_samples}
When $\lambda = 0$, then privacy loss $\epsilon = 0$ and the model relies only on the public retrieval dataset.

\begin{figure}[H]
    \begin{subfigure}[h]{0.4\textwidth}
            \centering
            \includegraphics[width=1\linewidth]{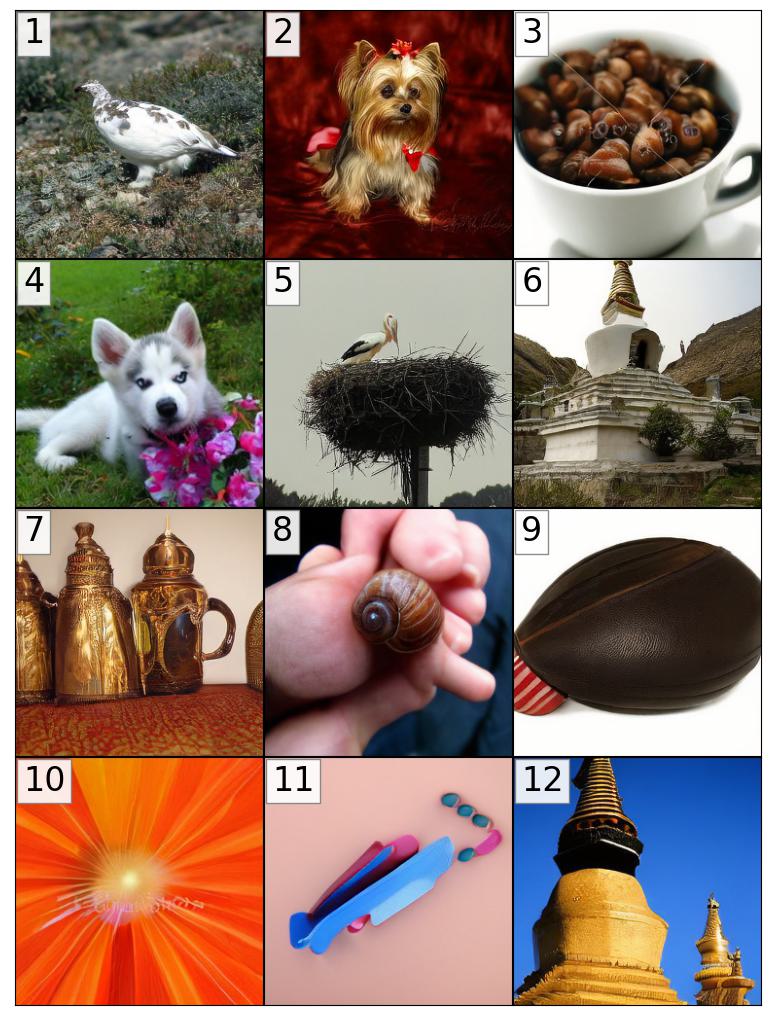}
            \caption{Shutterstock Validation}
      \end{subfigure}
      \hfill
    \begin{subfigure}[h]{0.4\textwidth}
      \centering
      \includegraphics[width=1\linewidth ]{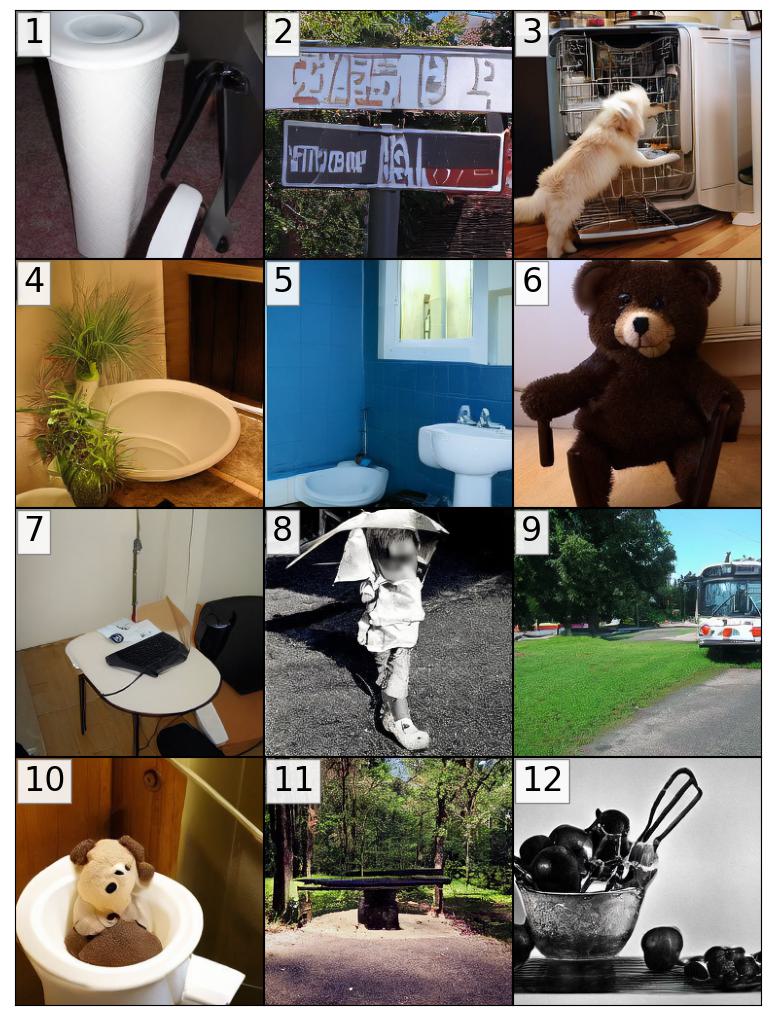}
      \caption{MS-COCO Validation}
    \end{subfigure}
    \label{fig:samples-img-prompt-inetfb}
\end{figure}
\begin{smaller}
\begin{figure}[H]
    \begin{subfigure}[h]{0.46\textwidth}
            \centering
            \begin{tabular}{p{0.25cm}|p{7.7cm}}
\toprule
Ref & Validation Prompt \\
\midrule
1 & Female Ptarmigan Stood. \\
2 & Yorkshire terrier with a red bow \\
3 & Bowl of coffee beans , isolated on white background. \\
4 & Cute Siberian Husky puppy with pink flowers \\
5 & Stork stands guarding the nestlings in his nest. Beautiful Moon in the background - Banya, Bulgaria  \\
6 & Ancient Bon stupa in Saldang village, Nepal. Saldang lies in Nankhang Valley, the most populous of the sparsely populated valleys making up the culturally Tibetan region of Dolpo. \\
7 & Turkish coffee pots made in a traditional style \\
8 & Human hand holding a big snail shell \\
9 & Rugby American football ball isolated on white background. This has clipping path. \\
10 & abstract orange. Summer background with a magnificent sun burst with lens flare. space for your message \\
11 & colored disposable knives lie on a pink and blue background \\
12 & Closeup of golden art sculpture pagoda and golden tiered umbrella in temple against with blue sky in north of Thailand. Faith and religion concept. \\
\bottomrule
\end{tabular}

            \caption{\DPRDMADAPT\ with Shutterstock.}
      \end{subfigure}
      \hfill
    \begin{subfigure}[h]{0.46\textwidth}
      \centering
      \begin{tabular}{p{0.5cm}|p{6.5cm}}
\toprule
Ref & Validation Prompt \\
\midrule
1 & A white toilet sitting next to a toilet paper roller. \\
2 & A street sign reads both in English and Asian script. \\
3 & A dog inspects an open dishwasher in a kitchen. \\
4 & A bathroom sink has a running facet and bamboo plant. \\
5 & A bathroom with blue walls and white tiles. \\
6 & A teddy bear sitting on a chair inside a room.  \\
7 & A computer desk with an open umbrella hooked up to wires. \\
8 & A young baby girl standing in a driveway holding an open umbrella. \\
9 & A bus that is sitting in the grass near a street. \\
10 & A  stuffed animal pushed into a toilet. \\
11 & A metal bench sits on a path in the forest. \\ \\
12 & Black and white photograph of a bowl of apples. \\ \\
\bottomrule
\end{tabular}

      \caption{\DPRDMADAPT\ with MS-COCO.}
    \end{subfigure}
    \label{fig:samples-tbl-prompt-appx}
\end{figure}
\end{smaller}

\subsection{Failure Cases}
\label{sec:appendix_failure_cases}
In the following figures and tables we show examples of generated samples with the lowest CLIPScore when compared to their validation prompt.
    \begin{figure}[H]
            \centering
            \includegraphics[width=1\linewidth]{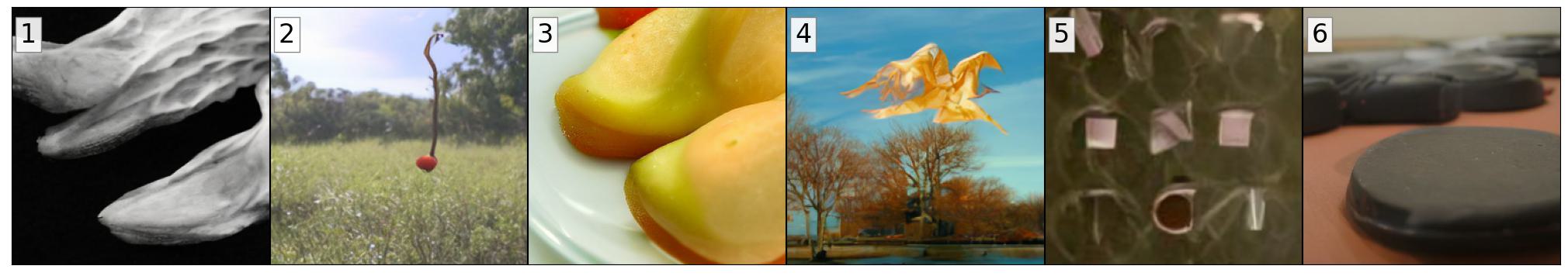}
            \caption{Shutterstock}
      \end{figure}
      
    \begin{figure}[H]
      \centering
      \includegraphics[width=1\linewidth ]{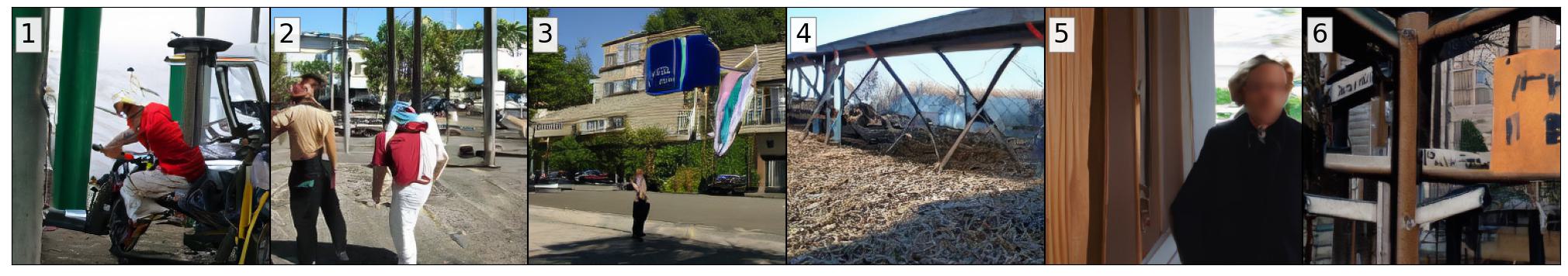}
      \caption{MS-COCO}
    \end{figure}
\begin{smaller}
\begin{figure}[H]
    \begin{subfigure}[h]{0.45\textwidth}
            \centering
            \begin{tabular}{p{0.5cm}|p{6.5cm}}
\toprule
Ref & Validation Prompt \\
\midrule
1 & Beautiful Seamless Pattern of Watercolor Summer Garden Blooming Flowers on purple background \\
2 & Hello august calligraphy and set of cute and necessary things for relaxing on the beach, isolated on white background.  \\
3 & strawberry milk splash isolated on white background \\
4 & Merry Christmas line flat design card with holidays symbols - Santa Claus, Christmas tree, house, mistletoe \\
5 & Illustration of card with bowknot with round frame for text isolated on white background. Black friday sale banner template. Design for Christmas sale, shopping, retail, discount  poster \\
6 & Men's classic shoes isolated on white background \\
\bottomrule
\end{tabular}

            \caption{Failure cases with Shutterstock.}
      \end{subfigure}
      \hfill
    \begin{subfigure}[h]{0.45\textwidth}
      \centering
      \begin{tabular}{p{0.5cm}|p{6.5cm}}
\toprule
Ref & Validation Prompt \\
\midrule
1 & An old plane flying through the cloudy sky. \\ \\ \\
2 & A person sitting by the water with a bike. \\ \\ \\ 
3 & A man in a neon green vest is holding a stop sign on the newly paved road. \\ \\
4 & Tire marks carve paths down a snow covered street. \\ 
5 & An elderly man wearing a white button down shirt and a black bow tie smiling. \\ \\ \\ \\ 
6 & A number of red and green traffic lights on a wide highway. \\
\bottomrule
\end{tabular}

      \caption{Failure cases with MS-COCO.}
    \end{subfigure}
    \label{fig:samples-tbl-prompt-failure}
\end{figure}
\end{smaller}

\newpage
\section{Appendix: RDM and DP-RDM Compared}
\label{sec:app_private_retrieval}
Here we show how \DPRDMADAPT\ mitigates individual samples from appearing in an adversarially chosen dataset via noisy aggregation. 
\begin{figure}[H]
    \centering
    \begin{subfigure}[b]{0.4\textwidth}
         \centering
         \includegraphics[width=\textwidth]{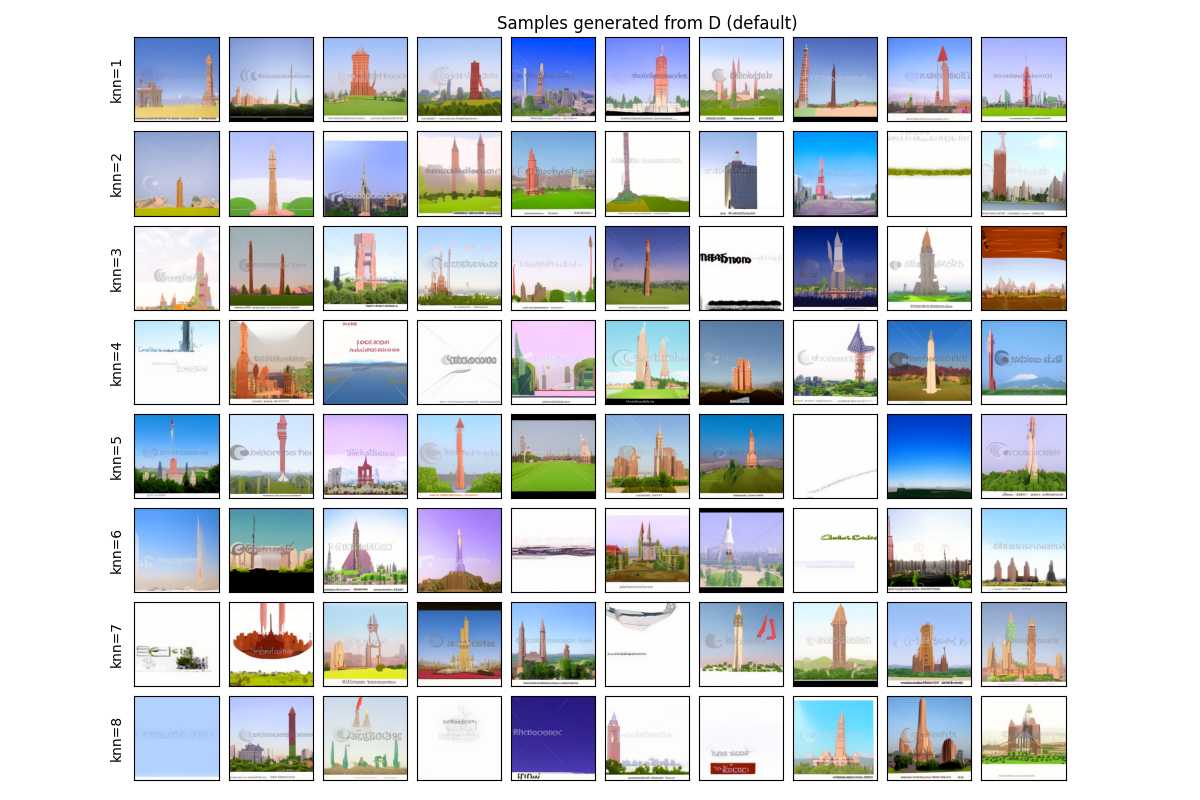}
         \caption{Adversarial $D$ with \textit{RDM}}
         \label{fig:mia_default_D}
     \end{subfigure}
     \hfill
     \begin{subfigure}[b]{0.4\textwidth}
         \centering
         \includegraphics[width=\textwidth]{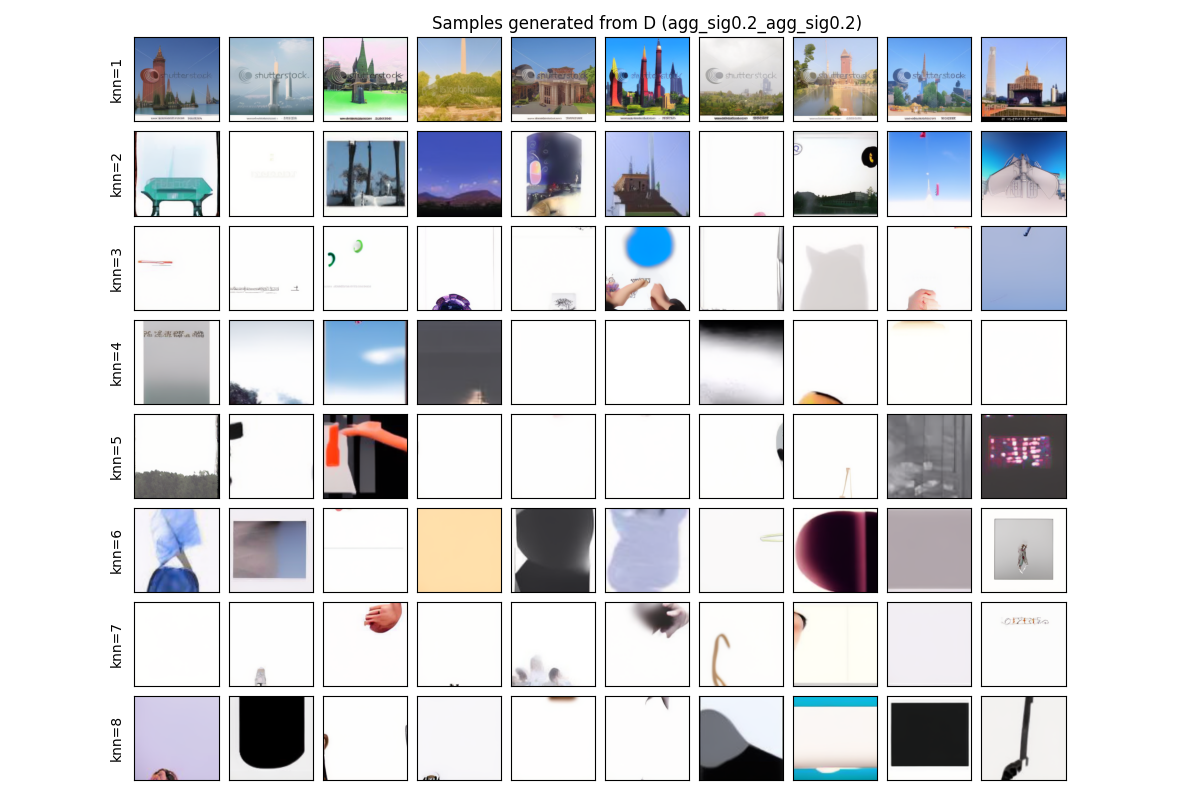}
         \caption{Adversarial $D$ with \DPRDMADAPT}
         \label{fig:mia_default_Dprime}
     \end{subfigure}
\end{figure}

\section{Appendix: Quantitative Results}
\label{sec_appendix:quant_results}
\begin{figure}[H]
    \centering
    \begin{tabular}{llrrrr}
\toprule
 &  & FID & CLIPScore & Coverage & Density \\
Model & Validation &  &  &  &  \\
\midrule
\multirow[t]{3}{*}{RDM-adapt PR} & CIFAR-10 & 40.87 & - & - & - \\
 & MS-COCO FB & 14.40 & 0.294 & 0.85 & 1.15 \\
 & Shutterstock & 24.32 & 0.269 & 0.79 & 0.95 \\
\cline{1-6}
\bottomrule
\end{tabular}

    \caption{Imagenet Face Blurred: Tabular Results}
    \label{fig:imagenet_tab_results}
\end{figure}

\begin{figure}[H]
    \centering
    \begin{tabular}{lllllllllll}
\toprule
 &  &  & FID & CLIPScore & Coverage & Density & $\sigma$ & $k$ & $\lambda$ & $q$ \\
MS-COCO FB & $T$ & $\epsilon$ &  &  &  &  &  &  &  &  \\
\midrule
RDM-adapt PR & 0 & $\infty$ & 10.08 & 0.267 & 0.88 & 1.4 & 0.0 & 4 & 1.0 & 1.0 \\
\midrule
\multirow[t]{15}{*}{RDM-adapt DPR} & \multirow[t]{3}{*}{1} & 5 & 11.14 & 0.26 & 0.86 & 1.13 & 0.05 & 23 & 1.0 & 0.05 \\
 &  & 10 & 11.35 & 0.273 & 0.88 & 1.25 & 0.04 & 21 & 1.0 & 0.1277 \\
 &  & 20 & 10.68 & 0.265 & 0.88 & 1.14 & 0.05 & 10 & 1.0 & 0.05 \\
\cline{2-11}
 & \multirow[t]{3}{*}{10} & 5 & 11.32 & 0.259 & 0.86 & 1.13 & 0.05 & 28 & 1.0 & 0.05 \\
 &  & 10 & 11.36 & 0.251 & 0.85 & 0.97 & 0.065 & 16 & 1.0 & 0.05 \\
 &  & 20 & 10.96 & 0.263 & 0.88 & 1.13 & 0.05 & 15 & 1.0 & 0.05 \\
\cline{2-11}
 & \multirow[t]{3}{*}{100} & 5 & 11.57 & 0.257 & 0.85 & 1.1 & 0.05 & 37 & 1.0 & 0.05 \\
 &  & 10 & 11.75 & 0.249 & 0.83 & 0.99 & 0.065 & 21 & 1.0 & 0.05 \\
 &  & 20 & 11.1 & 0.261 & 0.86 & 1.12 & 0.05 & 21 & 1.0 & 0.05 \\
\cline{2-11}
 & \multirow[t]{3}{*}{1,000} & 5 & 15.5 & 0.24 & 0.75 & 0.88 & 0.05 & 29 & 1.0 & 0.01 \\
 &  & 10 & 12.14 & 0.246 & 0.82 & 0.94 & 0.065 & 34 & 1.0 & 0.05 \\
 &  & 20 & 11.26 & 0.258 & 0.85 & 1.11 & 0.05 & 32 & 1.0 & 0.05 \\
\cline{2-11}
 & \multirow[t]{3}{*}{10,000} & 5 & 20.01 & 0.218 & 0.66 & 0.66 & 0.08 & 30 & 1.0 & 0.01 \\
 &  & 10 & 16.68 & 0.237 & 0.73 & 0.83 & 0.05 & 34 & 1.0 & 0.01 \\
 &  & 20 & 14.61 & 0.242 & 0.77 & 0.91 & 0.05 & 26 & 1.0 & 0.01 \\
\cline{1-11} \cline{2-11}
\multirow[t]{6}{*}{RDM-fb} & 1 & 10 & 17.9 & 0.232 & 0.94 & 0.78 & 0.03 & 22 & 1.0 & 0.01 \\
\cline{2-11}
 & 10 & 10 & 20.86 & 0.227 & 0.88 & 0.62 & 0.035 & 22 & 1.0 & 0.01 \\
\cline{2-11}
 & 100 & 10 & 21.42 & 0.227 & 0.6 & 0.61 & 0.033 & 28 & 1.0 & 0.01 \\
\cline{2-11}
 & 1,000 & 10 & 27.07 & 0.22 & 0.49 & 0.4 & 0.041 & 28 & 1.0 & 0.01 \\
\cline{2-11}
 & 10,000 & 10 & 42.01 & 0.207 & 0.3 & 0.18 & 0.059 & 28 & 1.0 & 0.01 \\
\cline{2-11}
 & 100,000 & 10 & 155.67 & 0.192 & 0.06 & 0.02 & 0.164 & 22 & 1.0 & 0.01 \\
\cline{1-11} \cline{2-11}
\bottomrule
\end{tabular}

    \caption{MS-COCO Face Blurred: Tabular Results}
    \label{fig:mscoco_tab_results}
\end{figure}

\begin{figure}[H]
    \centering
    \begin{tabular}{lllllllllll}
\toprule
 &  &  & FID & CLIPScore & Coverage & Density & $\sigma$ & $k$ & $\lambda$ & $q$ \\
Shutterstock & $T$ & $\epsilon$ &  &  &  &  &  &  &  &  \\
\midrule
RDM-adapt PR & 0 & $\infty$ & 20.09 & 0.245 & 0.77 & 0.71 & 0.05 & 4 & 1.0 & 1.0 \\
\cline{1-11} \cline{2-11}
\multirow[t]{15}{*}{RDM-adapt DPR} & \multirow[t]{3}{*}{1} & 5 & 20.77 & 0.256 & 0.8 & 0.84 & 0.08 & 16 & 0.5 & 0.01 \\
 &  & 10 & 20.73 & 0.244 & 0.77 & 0.73 & 0.065 & 14 & 0.8 & 0.01 \\
 &  & 20 & 20.95 & 0.262 & 0.8 & 0.87 & 0.06 & 10 & 0.5 & 0.01 \\
\cline{2-11}
 & \multirow[t]{3}{*}{10} & 5 & 21.19 & 0.255 & 0.8 & 0.83 & 0.08 & 17 & 0.5 & 0.01 \\
 &  & 10 & 20.93 & 0.26 & 0.8 & 0.85 & 0.065 & 16 & 0.5 & 0.01 \\
 &  & 20 & 21.05 & 0.262 & 0.8 & 0.88 & 0.06 & 12 & 0.5 & 0.01 \\
\cline{2-11}
 & \multirow[t]{3}{*}{100} & 5 & 21.28 & 0.247 & 0.78 & 0.79 & 0.06 & 24 & 0.8 & 0.01 \\
 &  & 10 & 20.99 & 0.256 & 0.8 & 0.84 & 0.08 & 14 & 0.5 & 0.01 \\
 &  & 20 & 20.86 & 0.244 & 0.78 & 0.74 & 0.065 & 13 & 0.8 & 0.01 \\
\cline{2-11}
 & \multirow[t]{3}{*}{1,000} & 5 & 21.02 & 0.247 & 0.78 & 0.75 & 0.06 & 28 & 0.8 & 0.01 \\
 &  & 10 & 20.96 & 0.26 & 0.81 & 0.88 & 0.065 & 21 & 0.5 & 0.01 \\
 &  & 20 & 21.09 & 0.244 & 0.77 & 0.74 & 0.065 & 16 & 0.8 & 0.01 \\
\cline{2-11}
 & \multirow[t]{3}{*}{10,000} & 5 & 22.06 & 0.234 & 0.76 & 0.73 & 0.08 & 36 & 0.8 & 0.01 \\
 &  & 10 & 21.27 & 0.263 & 0.81 & 0.88 & 0.05 & 38 & 0.5 & 0.01 \\
 &  & 20 & 21.44 & 0.263 & 0.81 & 0.88 & 0.05 & 29 & 0.5 & 0.01 \\
\cline{1-11} \cline{2-11}
\bottomrule
\end{tabular}

    \caption{Shutterstock: Tabular Results}
    \label{fig:shutterstock_tab_results}
\end{figure}

\end{document}